\documentclass{article}

     \PassOptionsToPackage{numbers, compress}{natbib}
\usepackage{microtype}
\usepackage{graphicx}
\usepackage{subcaption}
\usepackage{booktabs} 
  \usepackage[preprint]{neurips_2026}

\usepackage[utf8]{inputenc} 
\usepackage[T1]{fontenc}    
\usepackage{hyperref}       
\usepackage{url}            
\usepackage{booktabs}       
\usepackage{amsfonts}       
\usepackage{nicefrac}       
\usepackage{microtype}      
\usepackage{xcolor}         

\usepackage{amsmath}
\usepackage{amssymb}
\usepackage{mathtools}
\usepackage{amsthm}
\usepackage{changepage}
\usepackage{rotating}

\usepackage{algorithm}
\usepackage{algpseudocode}

\setcitestyle{numbers}

\theoremstyle{plain}
\newtheorem{theorem}{Theorem}[section]
\newtheorem{proposition}[theorem]{Proposition}
\newtheorem{lemma}[theorem]{Lemma}
\newtheorem{corollary}[theorem]{Corollary}
\theoremstyle{definition}
\newtheorem{definition}[theorem]{Definition}

\newtheorem{remark}[theorem]{Remark}

\title{Inheritance Between Feedforward and Convolutional Networks via Model Projection}

\author{%
  Nicolas Ewen\\
  Department of Mathematics and Statistics\\
  York University\\
  Toronto, Ontario M3J 1P3, Canada \\
   \And
   Jairo Diaz-Rodriguez \\
  Department of Mathematics and Statistics\\
  York University\\
  Toronto, Ontario M3J 1P3, Canada \\
   \AND
   Kelly~Ramsay \\
  Department of Mathematics and Statistics\\
  York University\\
  Toronto, Ontario M3J 1P3, Canada \\
}

\begin{document}

\maketitle

\begin{abstract}
Neural-network techniques are often transferred across architecture families by analogy, but such transfer is valid only when the assumptions required by a technique are preserved. We 
introduce this idea as \emph{inheritance} between model classes. Using a unified node-level framework with tensor-valued activations, we prove that generalized feedforward networks (GFFNs) form a strict subset of generalized convolutional networks (GCNNs), so GCNN 
properties transfer directly to GFFNs. The reverse direction is not automatic: standard CNN nodes use spatial kernels, while FFN nodes use one scalar weight per input contribution. We introduce \emph{model projection} to recover a restricted reverse inheritance path. Projection freezes each convolutional input-channel sub-function and learns one scalar coefficient for each input-output channel contribution, giving projected CNN nodes the GFFN-style trainable structure of scalar-weighted input recombination. This inherited structure leads naturally to parameter-efficient transfer learning. Across multiple ImageNet-pretrained CNN backbones and downstream image-classification datasets, model projection is competitive with standard and PEFT baselines and provides an effective initialization for subsequent full fine-tuning.
\end{abstract}


\section{Introduction}

Techniques and intuitions developed for one neural-network family are often reused in another, but this transfer is rarely made precise. FFNs and CNNs are a common example: ideas developed for one are frequently applied to the other, even though the two families are usually formalized with different input structures and node operations. We study this issue through the lens of \emph{inheritance}: the transfer of a property between model classes through an explicit structural relationship that preserves the assumptions required by that property. This distinguishes valid transfer from architectural analogy. Rather than asking whether two architectures look similar, we ask which assumptions are preserved when one model class is related to another, and which properties therefore transfer.

This perspective is especially useful for FFNs and CNNs because the two classes differ in their trainable input contributions. A traditional FFN node has one scalar weight per input contribution, whereas a standard CNN node has a spatial kernel for each input-output channel contribution. Thus, techniques whose assumptions depend on a linear combination in which each input contribution is controlled by a single scalar weight apply naturally to FFN nodes, but not automatically to ordinary CNN nodes. If a structural mapping allows CNNs to inherit this scalar-weighted trainable structure from FFNs, then the inherited structure can be naturally exploited for parameter-efficient transfer learning. This motivates our central question: \emph{when can CNNs and FFNs inherit properties from one another, and can such inheritance be used for parameter-efficient transfer learning?}

\textbf{A unified formalization.}
We first introduce a unified node-level formalization with tensor-valued activations. In this framework, generalized feedforward network (GFFN) nodes perform scalar-weighted summation across input channels while preserving tensor structure, whereas generalized convolutional network (GCNN) nodes apply channel-wise convolutional transformations and then sum across channels. This lets us compare the two model classes directly. We prove that GFFNs form a strict subset of GCNNs: a convolution with kernel size one in every spatial dimension reduces to scalar multiplication of each input channel. Therefore, any property that holds for all GCNNs also holds for the corresponding GFFN subclass.

\textbf{Model projection.}
The reverse direction does not hold by subclass inclusion. Arbitrary CNNs are not GFFNs, because their input-output channel contributions are spatial kernels rather than scalar weights. To recover a restricted reverse direction, we introduce \emph{model projection}. Projection uses the fact that CNN node functions are separable by input channel before the nonlinearity. It freezes each convolutional input-channel sub-function and learns one scalar coefficient for each input-output channel contribution. The resulting projected node has the same scalar-weighted trainable summation structure as a GFFN node, possibly with inhomogeneous inputs.

\textbf{PEFT benefit.}
This gives the precise inheritance statement needed for our application: projected CNN nodes inherit the FFN-style trainable structure of one scalar trainable weight per input contribution. When applied to convolutional layers, this inherited property directly yields a parameter-efficient transfer-learning method. Instead of adapting an input-output channel contribution by training an entire spatial kernel, model projection freezes the pretrained kernel and trains only a scalar coefficient. Model projection therefore preserves pretrained spatial feature extractors while allowing every convolutional layer to adapt through scalar channel-contribution weights rather than full spatial kernels
(See Fig.~\ref{fig:projExplainer}).

\textbf{Empirical evaluation.}
We evaluate model projection as a parameter-efficient transfer-learning method for ImageNet-pretrained CNN backbones. Across multiple downstream image classification datasets and architectures, projection is competitive with logistic regression, full fine-tuning, and established PEFT baselines. Empirically, projection often combines the stability of partial tuning with the continued improvement of adapting the full model. In two-stage training, projection also provides a strong initialization for subsequent full fine-tuning.

\textbf{Contributions.}
Our main contributions are:
\begin{itemize}
    \item We introduce unified tensor-compatible formalizations of FFN and CNN nodes, enabling direct comparison between the corresponding model classes. Within this framework, we prove that GFFNs form a strict subset of GCNNs, establishing one rigorous inheritance direction.

    \item We introduce model projection as a structural mapping from separable CNN nodes to projected nodes with GFFN-style scalar-weighted trainable summation. We prove that projected nodes inherit FFN properties and techniques whose assumptions depend only on this structure and do not require homogeneous inputs across nodes.
    
    \item We turn this inherited scalar-weighted structure into a parameter-efficient transfer-learning method for CNNs. Projection adapts convolutional layers by training one scalar per input-output channel contribution rather than full spatial kernels, and experiments show that this yields a competitive accuracy-efficiency trade-off across transfer-learning benchmarks.

    \item We provide an open-source implementation of projection and demonstrate its practical value for transfer learning across multiple public datasets, showing that projection can adapt entire models while training substantially fewer parameters.
    
\end{itemize}

\subsection{Related Work}

\paragraph{Formalizations and inheritance across model families.}
Formal treatments of neural-network computations are often specialized to particular architectures and input shapes rather than expressed in a unified, tensor-level language. Classical discussions of feedforward networks typically assume vector-valued scalar inputs \citep{hastieESL}, while standard formulations of convolutional networks are usually presented for specific spatial cases, most commonly two-dimensional inputs \citep{lecun2010convolutional}. This separation makes it harder to state precise relationships between model classes, and therefore to identify when results or techniques from one class preserve the assumptions needed to apply to another. Although ideas are frequently transferred informally between FFNs and CNNs, explicit structural accounts of such transfer appear limited. This motivates a framework that makes the preserved node-level structure explicit.

\paragraph{Parameter-efficient transfer learning.}
Transfer learning adapts a pretrained model to a downstream task, often in low-data regimes where full fine-tuning can overfit or be difficult to optimize \citep{chollet2021deep,han2024facing,jia2022visual,wang2023real}. Parameter-efficient transfer learning (PEFT) restricts the trainable degrees of freedom \citep{kornblith2019better}. BatchNorm adaptation updates affine normalization parameters \citep{ioffe2015batch}, but these act after convolutional input contributions have already been summed; projection instead rescales each fixed input-output channel contribution before aggregation. BitFit tunes bias terms \citep{zaken2022bitfit}, but biases shift node outputs rather than selectively reweighting individual contributions. Scale-shift methods modulate features with lightweight affine transformations \citep{lian2022scaling}, but typically act at the activation or channel level rather than per input-output contribution. Residual and convolutional adapters add auxiliary trainable modules \citep{rebuffi2017learning,chen2024conv,luo2023towards}, whereas projection keeps the original convolutional structure and only rescales pretrained filter-channel contributions. Low-rank adaptation learns additive weight updates \citep{hu2022lora}, including convolution-specific variants \citep{ding2024lora}, while projection restricts each pretrained filter channel to a one-dimensional multiplicative subspace. Sensitivity-aware methods allocate limited tuning capacity across layers \citep{he2023sensitivity}, and are complementary to projection. Thus, projection is more fine-grained than input- or output-channel scaling, but more constrained than adapters or low-rank updates. 

\paragraph{Pre-defined CNNs.}
Model projection also differs from pre-defined CNNs, where a set of fixed filters produce outputs that are then passed to a layer of $1 \times 1$ convolutional nodes \citep{linse2023convolutional}. In projection, the pretrained convolutional filter channels are not first aggregated into a shared intermediate representation. Instead, each output node rescales its own fixed input-channel contributions before summation. We elaborate on this distinction in Remark~\ref{pre-definedCNN}.

\section{Background and Problem}

\subsection{Problem}

Our goal is to answer the following questions: \textit{Under what conditions, if any, do CNNs naturally inherit results from FFNs, and vice versa}, and, \textit{does this inheritance aid in transfer learning?}
For example, traditional feed forward neural networks train with fewer parameters per node. When fine tuning a CNN foundation model for a downstream task, can we inherit this property to efficiently train an entire neural network with fewer parameters?


Before introducing the relevant background, we first formalize what is meant by a neural network model. 
We define a network model $M\coloneqq M(\theta)$, where $\theta\in\mathbb{R}^p$, to be some function composed of $L\in\mathbb{N}$ sequential layers. That is, $M$ can be expressed as a composition of $L$ subfunctions or \emph{layers} $M(\theta)=f_L \circ \ldots  \circ f_1$. Given input $Z_i$ and parameters $W_i$, each layer $f_i$ produces a sequence of at least two outputs $a_{ij},\ j=1,\ldots,J_i, J_i \geq 2$. Each output $a_{ij}$ corresponds to a node $f_{ij}$ and subset of parameters $W_{ij}$. We may refer to $a_{ij}$ as the $j$th output \emph{channel} of layer $i$. 
Precisely, each layer can be written as
\begin{align*}
   f_i(Z_i, W_i) &= [f_{i1}(Z_i, W_{i1}),\ldots, f_{iJ_i}(Z_i, W_{iJ_i})] = [a_{i1}, \ldots, a_{iJ_i}] \\&= [Z_{(i+1,1)}, \dots , Z_{(i+1,J_i)}] = Z_{i+1}.
\end{align*}


\subsection{Traditional Neural Networks}
Traditional neural networks (FFNs) are a type of network model, where data flows along edges, and calculations are done in nodes. We summarize the description from Hastie et al. \cite{hastieESL}. Traditionally, these calculations are linear regressions, followed by a non-linear function $\sigma$. 
Let subset $W_{jk}$ be the subset of parameters $W_j$ that are associated with input channel $Z_k$.
Given an input $Z \in \mathbb{R}^{d}$, a parameter set $W_j$ containing a vector of weights $W_{jk} \in \mathbb{R}^{d}$, and a bias $b_{j} \in \mathbb{R}$, the node will have the following form:
\begin{equation}\label{FFN}
    f_{j}(Z, W_{j}) 
    = \sigma(\sum_{k=1}^{d}Z_{k}W_{jk} + b_{j}).
\end{equation}
This formulation requires that the input and output data are vectors of scalars. While more general versions have been implemented in code \cite{chollet2021deep}, to the best of our knowledge, all formal treatments have so far required non-scalar data to be vectorized first.

\subsection{Convolutional Neural Networks}
Convolutional neural networks (CNNs) are another type of network model, but where some nodes perform a convolution. For simplicity, we describe the 2-dimensional case from \citet{lecun2010convolutional}, from which the more general case extends naturally. Given an input $Z \in \mathbb{R}^{d \times n_1 \times n_2}$, and a parameter set containing a filter $F_{j} \in \mathbb{R}^{d \times l_1 \times l_2}$ with filter channels $F_{jk} \in \mathbb{R}^{l_1 \times l_2}$, and a bias $B_{j} \in \mathbb{R}^{m_1 \times m_2}$, a convolutional node $f_{j}$ will have the following form, where $\ast$ denotes the convolution operation:
\begin{equation}\label{CNN}
    f_{j}(Z, W_{j}) = \sigma(\sum_k F_{jk} \ast Z_{k} + B_{j}).
\end{equation}
$F_{jk}$ connects input channel $Z_{k}$ to output $a_{j}$. We consider the node $f_{j}$ to be the collection of operations and parameters connecting $Z$ to output $a_{j}$, plus the bias.
While CNNs often operate on 1-, 2-, or even 3-dimensional data, the common formulations often only specify the 2-dimensional case. 
For further reading see \citep{lecun2010convolutional, chollet2021deep, goodfellow2016deep}.


\section{Inheritance between CNNs and FFNs}\label{sec:proofs}
\subsection{Node Generalization and Model Subclasses}
We can now address our earlier question of inheritance through \emph{model subclasses} and \emph{model projections}. We start by generalizing equations for commonly used neural network models.
In order to do so, we define a generalization of the dot product that outputs a tensor called the tensor dot product. 
Let a tensor be a multidimensional array with shape $D$. 
Let $\Gamma_D$ be a tensor of shape $D$ where all values are $1$.
\begin{definition}
For $Z \in \mathbb{R}^{d \times D}$, and $W_{jk} \in \mathbb{R}^{d}$ the simplified tensor dot product, denoted $\odot_t$, is $Z \odot_t W_{jk} = \sum_{k=1}^{d} Z_{k}W_{jk}$. 
\end{definition}
The simplified tensor dot product takes a vector $Z$, of $d$ tensors, $Z_{k} \in \mathbb{R}^{D}$, of uniform size and shape $D$, as well as a vector of $d$ scalars, $W_{jk}$, and outputs a tensor of size and shape $D$. Here, $Z_{k}W_{jk}$ is scalar multiplication. If the size of $Z_{k}$ is $1$ in all dimensions, $\odot_t$ reduces to the traditional dot product. 

\begin{remark}
For ease of exposition, we introduced a simpler version of the tensor dot product, namely $W_{jk}$ is assumed to be a vector of scalars, whereas commonly used implementations use a more general tensor dot product for such nodes, such as in dense layers in Tensorflow \cite{tensorflow2015-whitepaper, chollet2021deep}, and linear layers in Pytorch \cite{Ansel_PyTorch_2_Faster_2024}.
\end{remark}

We now generalize FFN nodes to apply to structures of data other than vectors of scalars, and define a general convolutional node for a tensor of shape $D$. Let $\Gamma_D$ be a tensor of shape $D$ where all values are equal to one. 
\begin{definition}\label{gFFN}
For $Z \in \mathbb{R}^{d \times D_{in}}$, $b_{j} \in \mathbb{R}, \Gamma_{D_{out}} \in \mathbb{R}^{D_{out}}$, and $W_{jk} \in \mathbb{R}^{d}$
 the GFFN node is:
\begin{equation}\label{gffnEqn}
\begin{split}
f_{j}(Z, W_{j}) = \sigma(Z \odot_t W_{jk} + b_{j}\Gamma_{D_{out}}). = \sigma(\sum_{k=1}^{d} Z_{k}  W_{jk} + b_{j}\Gamma_{D_{out}}).
\end{split}
\end{equation}
\end{definition}
To generalize an FFN node from Equation \ref{FFN}, we use the tensor dot product and replace 
$b_{j} \in \mathbb{R}$ with $b_{j}\Gamma_{D_{out}}
\in \mathbb{R}^{D_{out}}$. 
In other words, 
the bias is now a tensor with all values being $b_{j}$. 
%
 Traditional neural network nodes are a special case of GFFN nodes where the input data is a vector of scalars. If the 3-dimensional input is a sequence of 2D image channels, then a GFFN node acts as a $1 \times 1$ 
 CNN node. To act as a fully connected layer in the traditional sense, the input can be flattened first.  
We now define GCNN nodes. 
\begin{definition}\label{gCNN}
For $F_{j} \in \mathbb{R}^{d \times D_{F_{j}}}$, $Z \in \mathbb{R}^{d \times D_{in}}$, and $b_{j}\Gamma_{D_{out}} \in \mathbb{R}^{D_{out}}$ 
 the GCNN node is:
 \begin{equation}\label{gCNNEqn}
  \begin{split}
      f_{j}(Z, W_{j}) =
\sigma(Z \ast F_{j} + b_{j}\Gamma_{D_{out}}). = \sigma(\sum_{k=1}^{d} Z_{k} \ast F_{jk} + b_{j}\Gamma_{D_{out}}).
  \end{split}   
 \end{equation}
\end{definition}
That is, to generalize a CNN node from Equation \ref{CNN}, we replace
$F_{j} \in \mathbb{R}^{d \times l_1 \times l_2}$ with $F_{j} \in \mathbb{R}^{d \times D_{F_{j}}}$, we replace
$Z \in \mathbb{R}^{d \times n_1 \times n_2}$ with $Z \in \mathbb{R}^{d \times D_{in}}$,
and we replace $B_{j} \in \mathbb{R}^{m_1 \times m_2}$ with $b_{j}\Gamma_{D_{out}} \in \mathbb{R}^{D_{out}}$.
Colloquially, the filters and data have the same number of dimensions in each channel, but the number of dimensions is not specified globally for the network. Instead of a vector of 2D images, we now have a vector of tensors. The bias is now a tensor with all values being $b_{j}$. 

Now that we have formalized general definitions, we can prove that the class of traditional feed forward neural networks 
(FFNs) is a subclass of the class of convolutional neural networks (CNNs). 
\begin{theorem}\label{subsetsMain}
GFFNs are a strict subset of GCNNs.
\end{theorem}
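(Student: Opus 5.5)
The proof has two parts. First, every GFFN node is a GCNN node, and hence every GFFN (a network whose nodes are all GFFN nodes) is a GCNN. Second, some GCNN node, and so some GCNN, is not a GFFN. Since both classes are built layer by layer from nodes via $M(\theta)=f_L\circ\cdots\circ f_1$, it suffices to argue at the node level. The network-level inclusion then follows by replacing each node in turn.

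\textbf{Inclusion.} Take a GFFN node as in Definition~\ref{gFFN}, with scalar weights $W_{jk}\in\mathbb{R}$, $k=1,\ldots,d$, and input $Z\in\mathbb{R}^{d\times D_{in}}$, where $D_{in}$ has some number $r$ of dimensions. I would choose filter channels $F_{jk}\in\mathbb{R}^{1\times\cdots\times 1}$, with $r$ singleton dimensions, each having the single entry $W_{jk}$. The key computation is that convolution with a kernel of size one in every spatial dimension is pointwise scalar multiplication: $(Z_k\ast F_{jk})[x]=\sum_{u}Z_k[x-u]F_{jk}[u]=W_{jk}Z_k[x]$, since the only offset is $u=0$. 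In particular the output shape is $D_{out}=D_{in}$, matching the GFFN output shape. The bias $b_j\Gamma_{D_{out}}$ has the same form in both definitions, and $\sigma$ is common to both. Substituting into \eqref{gCNNEqn} then gives exactly \eqref{gffnEqn}.

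\textbf{Strictness.} I would exhibit a GCNN node that no GFFN node equals as a function of $Z$. Take $d=1$, one spatial dimension, input length $n\ge 2$, $\sigma$ the identity, zero bias, and filter $F=(1,1)$ of length $2$. The output then has length $n-1$, and $a[x]=Z[x]+Z[x+1]$. A GFFN node always maps $\mathbb{R}^{D_{in}}$ to $\mathbb{R}^{D_{in}}$ with each output entry depending only on the same-position inputs. So either the shape mismatch suffices, or, to avoid arguing over shape conventions such as padding, I would use same-padding. In that case $a[x]$ depends on $Z[x+1]$, whereas a GFFN with one input channel outputs $wZ[x]+b$. Evaluating at $Z=e_{x+1}$ and $Z=0$ gives a contradiction. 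To rule out the identity activation being a degenerate choice, the same argument works for any $\sigma$ that is injective, e.g.\ the identity.

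The main obstacle is that the paper's definitions leave the convolution convention (padding, stride, and whether $\ast$ is correlation) and the notion of ``model class'' informal. The inclusion step requires fixing a convention under which a $1\times\cdots\times1$ kernel gives $D_{out}=D_{in}$; this holds for both valid and same padding with stride $1$. The strictness step should be phrased so that it works under either convention, which is why I would rely on the locality argument via the basis vector $e_{x+1}$ rather than on the output shapes alone.
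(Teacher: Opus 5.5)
Your proposal is correct. The inclusion half is the paper's argument: a filter of size one in every dimension gives $Z_k \ast F_{jk} = Z_k F_{jk}$, so a GFFN node is literally a GCNN node with $1\times\cdots\times 1$ kernels. The paper also proves the converse, that every size-one GCNN node is a GFFN node, in order to phrase the result as a bijection; you correctly recognize this is not needed for inclusion.

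Where you genuinely differ is strictness. The paper's proof ends at ``GFFNs are a subset of GCNNs''. It leaves strictness implicit in the fact that GCNNs with larger kernels exist, which is strictness only at the level of parameterizations. It never shows that such a node cannot compute the same function as some GFFN node. Your locality argument supplies exactly that functional separation: a GFFN node's output at position $x$ depends only on the input at $x$, whereas a length-two kernel couples $x$ and $x+1$. The argument also holds under either padding convention. This is the more meaningful statement for an inheritance result.

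A few small tightenings:
\begin{itemize}
\item On the GFFN side, allow an arbitrary activation $\sigma'$ rather than writing the output as $wZ[x]+b$. Its output $\sigma'(wZ[x]+b)$ takes the same value $\sigma'(b)$ at $Z=0$ and at $Z=e_{x+1}$, so no assumption on $\sigma'$ is needed.
\item On the GCNN side you only need $\sigma(0)\neq\sigma(1)$, not injectivity. So ReLU works too, and the phrase ``e.g.\ the identity'' can be dropped.
\item For the network-level claim, note that compositions of GFFN layers remain position-wise, so the separation survives depth. Also add a second node to the layer to meet the paper's requirement $J_i\ge 2$.
\end{itemize}
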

The proof of this result can be found in Appendix~\ref{supProofs}.
To prove Theorem~\ref{subsetsMain}, we establish a bijection between the set of GFFNs and a subset of GCNNs, namely the set of GCNNs with kernels of size one in all dimensions. 
An important consequence of Theorem~\ref{subsetsMain} is the following.
\begin{corollary}\label{inheritanceMain}
Since the class of GFFNs is contained in the class of GCNNs, any property that holds for all GCNNs also holds for the corresponding subclass of GFFNs.
\end{corollary}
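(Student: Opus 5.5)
The corollary follows from Theorem~\ref{subsetsMain} by elementary set inclusion, so the plan is to make precise what ``property'' and ``corresponding subclass'' mean and then apply that inclusion. I will treat a property $P$ as a predicate on networks, that is, on the functions $M(\theta)=f_L\circ\cdots\circ f_1$ together with their parametrizations $W_{ij}$. Saying that $P$ ``holds for all GCNNs'' means $P(N)$ is true for every network $N$ in the GCNN class $\mathcal{C}$.

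First, I will invoke Theorem~\ref{subsetsMain} in the form its proof supplies. There is a bijection $\phi$ from the GFFN class $\mathcal{F}$ onto the subclass $\mathcal{C}_1\subsetneq\mathcal{C}$ of GCNNs whose kernels $F_{jk}$ have size one in every dimension. For such a kernel, $Z_k\ast F_{jk}=Z_kW_{jk}$ with $W_{jk}$ the single entry of $F_{jk}$, so $\phi$ carries each node in \eqref{gffnEqn} to the node in \eqref{gCNNEqn} with the same output tensor, the same bias $b_j\Gamma_{D_{out}}$ and the same $\sigma$. Since this holds node by node, it holds layer by layer, and hence $\phi(N)$ and $N$ compute the same function with parameters in one-to-one correspondence. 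The ``corresponding subclass'' is then exactly $\mathcal{C}_1$, identified with $\mathcal{F}$ through $\phi$.

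Second, take any $N\in\mathcal{F}$. Because $\phi(N)\in\mathcal{C}_1\subseteq\mathcal{C}$, the hypothesis gives $P(\phi(N))$. To transfer this back to $N$, $P$ must be invariant under the identification $\phi$. This is the only real obstacle. I will handle it by restricting attention to properties stated in terms of the node and layer structure of Definitions~\ref{gFFN} and~\ref{gCNN}: the computed function, its gradients with respect to the corresponding parameters, and the summation-over-channels form. I will check that $\phi$ preserves each of these ingredients. The function is preserved by the first step. The gradients are preserved because $\partial(Z_k\ast F_{jk})/\partial F_{jk}=Z_k=\partial(Z_kW_{jk})/\partial W_{jk}$ for a size-one kernel. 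The summation-over-channels form is preserved because the sum over $k$ is literally the same expression. For such $P$, $P(\phi(N))$ is equivalent to $P(N)$, and the conclusion follows.

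Finally, I will add a brief remark that the converse fails. By strictness in Theorem~\ref{subsetsMain}, $\mathcal{C}\setminus\mathcal{C}_1\neq\emptyset$, so a property of all GFFNs need not hold for GCNNs with larger kernels. This motivates model projection as the separate tool for the reverse direction.
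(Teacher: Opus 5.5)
Your proposal is correct and takes the same route as the paper. The paper gives no separate argument: it reads the corollary off Theorem~\ref{subsetsMain} by treating the bijection between GFFNs and size-one-kernel GCNNs as set containment. Your explicit requirement that the property be invariant under the identification $\phi$ is exactly the implicit assumption behind the paper's phrase ``corresponding subclass,'' so it sharpens the paper's argument rather than departing from it. Since $\phi$ is literally the identity on parametrized maps once a scalar weight is read as a size-one kernel, every property of the parametrized map transfers, not only the three ingredients you list.
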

Corollary~\ref{inheritanceMain} says that any results or techniques that apply to all GCNNs automatically apply to all GFFNs. Thus, any techniques and results from GCCNs carry over to GFFNs. 
\subsection{Model Projection}
Of course, CNNs are not a subclass of FNNs. As a consequence, we now introduce \emph{model projection}: a method to project CNNs and other non-FFN nodes, onto FFN nodes. 
This allows inheritance of many properties exclusive to FFNs, such as training a node with one weight per input.
We first define a node function.
\begin{definition}
The \emph{node function} $f_j'$ associated with a node $f_j$ is defined by
\[
f_j(Z,W_j) = \sigma\big(f_j'(Z,W_j)\big),
\]
that is, $f_j'$ is the pre-activation (inner) function of the node.
\end{definition}
We now define \textit{separable by input}, a property that a node function is required to have in order for the node to be projected. 
\begin{definition}
A node function $f_{j}'$ is separable by input if, for an input $Z$ with $d$ elements, $f_{j}'$ takes the following form: $f_{j}'(Z, W_{j}) =  \sum_{k=1}^{d} f_{jk}'(Z_{k}, W_{jk}) + b_{j}\Gamma_{D_{out}}$, where each $f_{jk}'$ depends only on the $k$th input component $Z_k$ and a corresponding subset $W_{jk} \subset W_j$ of parameters. 
\end{definition}
``Separable by input'' means that the node function decomposes into a sum of node sub-functions $f_{jk}'$ plus a bias, where each $f_{jk}'$ depends on only one input channel.  
For example, the function $g_1(x_1,x_2) = a_1x_1^2 + a_2x_2^2 +a_3x_1 + a_4x_2$, where $a_1,\ldots a_4\in\mathbb{R}$ is separable by input, since we can write 
\begin{align*}
    g_1(x_1,x_2) = (a_1x_1^2 + a_3x_1) + (a_2x_2^2 + a_4x_2)
    = g_{11}(x_1) + g_{12}(x_2). 
\end{align*}
On the other hand, the function $g_2(x_1,x_2) =  a_1x_1^2+ a_2x_2^2 + a_3x_1 + a_4x_2 + x_1x_2$ is not separable by input, because of the $x_1 x_2$ term. 
GCNN node functions are separable by input, see Lemma \ref{separablCNNs} in the Appendix. Node functions that are separable by input can be ``projected.''

\begin{definition}\label{def_proj}
Given a node function that is separable by input, its \emph{projected node} is defined as $\hat{f}_{j}(Z,W_{j}, \gamma_{j}) = \sigma(\sum_{k=1}^{d} \gamma_{jk} f_{jk}'(Z_{k},W_{jk}) + b_{j}\Gamma_{D_{out}})$. 
A model's \emph{GFFN projection} is the model that results from projecting all non-GFFN nodes that are separable by input and leaving the rest of the network unchanged. 
\end{definition}


Here, the idea is that if we view the weights $W_{jk}$ as fixed, and consider training the new weights $\gamma_{jk}$ for a downstream task, then the projected model is essentially a GFFN with node-specific pre-processing and inherits all properties, including training techniques, for a GFFN that do not rely on homogeneous inputs, which are discussed next. 

We now introduce a property of a layer called \textit{homogeneous inputs}.
\begin{definition}
A layer has \textit{homogeneous inputs} if all nodes in the layer have the same input, otherwise it has inhomogeneous inputs. A node has homogeneous inputs if it is part of a layer with homogeneous inputs, otherwise it has inhomogeneous inputs. 
\end{definition}
Our next result says that projected nodes are GFFN nodes with inhomogeneous inputs.
\begin{theorem}\label{projectionToGFFN}
Projected nodes are GFFN nodes with inhomogeneous inputs.
\end{theorem}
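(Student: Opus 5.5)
The plan is to read off a GFFN node directly from Definition~\ref{def_proj}. The only step needing care is showing that the inputs must be treated as node-specific.

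\textbf{Step 1: define new inputs and identify the weights.} Fix a projected node $\hat{f}_{j}(Z,W_j,\gamma_j)=\sigma\big(\sum_{k=1}^{d}\gamma_{jk} f_{jk}'(Z_k,W_{jk})+b_j\Gamma_{D_{out}}\big)$. Treat the sub-function parameters $W_{jk}$ as fixed; they are frozen, and only $\gamma_j$ and $b_j$ are trainable. For each $k$, define a new input channel $\tilde{Z}^{(j)}_k \coloneqq f_{jk}'(Z_k,W_{jk})$. Each $f_{jk}'$ appears summed with $b_j\Gamma_{D_{out}}$ in the separable form, so every $\tilde Z^{(j)}_k$ has the common shape $D_{out}$. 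Hence $\tilde Z^{(j)}=[\tilde Z^{(j)}_1,\dots,\tilde Z^{(j)}_d]\in\mathbb{R}^{d\times D_{out}}$ is a valid input for Definition~\ref{gFFN}, with $D_{in}=D_{out}$.

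\textbf{Step 2: match the GFFN form.} Take the weight vector $W'_{j}=(\gamma_{j1},\dots,\gamma_{jd})\in\mathbb{R}^d$ and keep the same bias $b_j$. Then
\[
\hat f_j(Z,W_j,\gamma_j)=\sigma\Big(\sum_{k=1}^d \tilde Z^{(j)}_k\gamma_{jk}+b_j\Gamma_{D_{out}}\Big)=\sigma\big(\tilde Z^{(j)}\odot_t W'_j+b_j\Gamma_{D_{out}}\big).
\]
This is exactly equation~\eqref{gffnEqn}: scalar multiplication of tensors followed by summation and a constant bias tensor. So the projected node is a GFFN node acting on $\tilde Z^{(j)}$.

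\textbf{Step 3: inhomogeneity.} Consider two nodes $j\neq j'$ in the same layer. Both receive the original $Z$, but as GFFN nodes their inputs are $\tilde Z^{(j)}$ and $\tilde Z^{(j')}$. These depend on the node-specific frozen sub-functions $f_{jk}'(\cdot,W_{jk})$ and $f_{j'k}'(\cdot,W_{j'k})$. For example, for a GCNN node (Lemma~\ref{separablCNNs}) we have $f_{jk}'(Z_k,W_{jk})=Z_k\ast F_{jk}$, so distinct filter channels give distinct inputs. In general, then, the nodes of a projected layer do not share a common input, which is inhomogeneity in the sense of the definition.

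This step is the main obstacle, because the claim is really about the generic case. If all $f_{jk}'$ coincide across $j$, the inputs are homogeneous, and that case is trivially also a GFFN. I would handle this by reading the statement as ``GFFN nodes whose inputs are permitted to be inhomogeneous.'' I would exhibit one GCNN layer with two distinct filters to show that inhomogeneity genuinely occurs, so the homogeneous-input assumption cannot be imposed in general.
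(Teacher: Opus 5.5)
Your proposal is correct and follows the paper's own argument. Like the paper, you rename the frozen sub-function outputs $f'_{jk}(Z_k,W_{jk})$ as node-specific input channels $\hat Z_{jk}$ and read off the GFFN form of Definition~\ref{gFFN} with $\gamma_j$ in place of $W_j$. Your shape check and your explicit treatment of inhomogeneity go beyond the paper, including the degenerate case where the sub-functions coincide across nodes and the inputs are in fact homogeneous. The paper signals node-specificity only through the subscript $j$ on $\hat Z_j$.
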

\begin{proof}
Let $\hat{Z}_{jk}$ denote the output of the node sub-function with fixed weights $f_{jk}'(Z_{k}, W_{jk})$. Then we can rewrite the equation for a projected node as: 
\begin{align*}
    \hat{f}_{j}(Z,W_{j}, \gamma_{j}) &= \sigma(\sum_{k=1}^{d} \gamma_{jk} f_{jk}'(Z_{k},W_{jk}) + b_{j}\Gamma_{D_{out}})\\
    &= \sigma(\sum_{k=1}^{d} \gamma_{jk} \hat{Z}_{jk} + b_{j}\Gamma_{D_{out}}) \\&= \sigma(\hat{Z}_{j} \odot_t \gamma_{j} + b_{j}\Gamma_{D_{out}}).
\end{align*}
The final expression above is in the form of Equation \ref{gffnEqn}, with $\hat{Z}_{j}$ replacing $Z_{j}$, and $\gamma_{j}$ replacing $W_{j}$. 
\end{proof}
A direct consequence of Theorem~\ref{projectionToGFFN} is the following. 
\begin{corollary}\label{projection}
Since projected nodes are GFFN nodes with inhomogeneous inputs, properties that apply to GFFN nodes with inhomogeneous inputs also apply to projected nodes. 
\end{corollary}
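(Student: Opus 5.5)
The plan is to derive Corollary~\ref{projection} directly from Theorem~\ref{projectionToGFFN}, treating it as an instance of subclass inheritance, the same logical pattern used to obtain Corollary~\ref{inheritanceMain} from Theorem~\ref{subsetsMain}. First, I will fix what a ``property that applies to GFFN nodes with inhomogeneous inputs'' means. It is a predicate $P$ on nodes whose truth depends only on the node being of the form in Equation~\ref{gffnEqn}. Concretely, $P$ may use the scalar-weighted summation structure $\sigma(\sum_k Z_k W_{jk} + b_j\Gamma_{D_{out}})$, with the $W_{jk}$ trainable. Its hypotheses may \emph{not} require that other nodes in the layer receive the same input $Z$. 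The claim to prove is then: if $P$ holds for every GFFN node with inhomogeneous inputs, it holds for every projected node.

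Second, I will take an arbitrary projected node $\hat f_j(Z,W_j,\gamma_j)$ as in Definition~\ref{def_proj}. I apply Theorem~\ref{projectionToGFFN} to rewrite it as $\sigma(\hat Z_j \odot_t \gamma_j + b_j\Gamma_{D_{out}})$, where $\hat Z_{jk} = f_{jk}'(Z_k,W_{jk})$ with $W_{jk}$ frozen. This exhibits the projected node as a GFFN node with input $\hat Z_j$ and weights $\gamma_j$. Since $\hat Z_j$ depends on $j$ through the node-specific frozen sub-functions, the node lies in the class of GFFN nodes with (in general) inhomogeneous inputs. By the hypothesis on $P$, $P$ holds for $\hat f_j$. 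Because the projected node was arbitrary, the corollary follows. It also extends to a full GFFN projection, since every projected node is covered and non-projected nodes are unchanged.

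The main obstacle is a type-checking issue in applying Theorem~\ref{projectionToGFFN}. Two things must be confirmed. The first is that the $\hat Z_{jk}$ all share a common shape $D$, so that $\odot_t$ is defined. For GCNN sub-functions this holds because each $Z_k \ast F_{jk}$ has the output shape $D_{out}$; in the general separable case it follows from the definition, since the sum $\sum_k f'_{jk}$ must be well-typed. The second is that the trainable parameters are exactly $\gamma_j$ (and $b_j$), with $W_{jk}$ treated as constants absorbed into the input. Once both points are stated, the inclusion of projected nodes in the GFFN class with inhomogeneous inputs is immediate, and the corollary is a one-line consequence of that set inclusion.
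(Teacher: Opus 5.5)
Your proposal is correct and follows the paper's route: the paper treats Corollary~\ref{projection} as an immediate consequence of Theorem~\ref{projectionToGFFN}. That is, once the frozen sub-function outputs $\hat{Z}_{jk}$ are taken as inputs and the $\gamma_{jk}$ as scalar weights, projected nodes lie in the class of GFFN nodes, which is exactly the class-inclusion argument you give. Your explicit checks (that the $\hat{Z}_{jk}$ share a common shape, that the trainable parameters are only $\gamma_j$ and $b_j$, and that ``inhomogeneous'' means ``not required to be homogeneous'') are details the paper leaves implicit but uses in the same way.
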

Corollary~\ref{projection} says that projected layers inherit properties that apply to GFFN layers with inhomogeneous inputs. 
In particular, given that GCNN node functions are separable by input, see Lemma \ref{separablCNNs} in the Appendix, we can project GCNN layers. 
This is especially useful for transfer learning. 
For example, GFFN nodes can be trained with one weight per input channel. This is true even in layers with inhomogeneous inputs. Therefore, a projected GCNN node can be trained with one weight per input channel, potentially greatly reducing the total number of parameters being trained.

\begin{figure*}t]
  \centering
  \includegraphics[width=\textwidth]{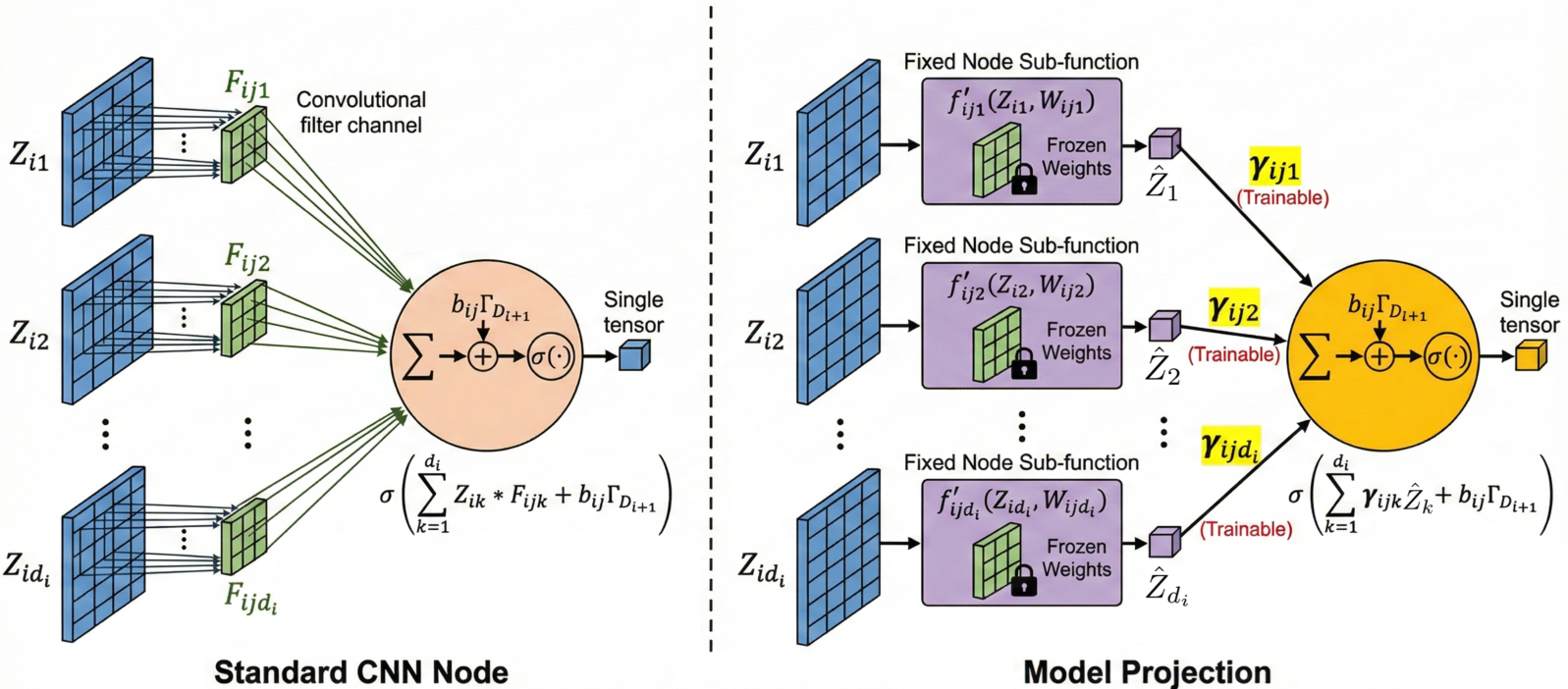}
  \caption{ (left) Standard CNN Node: Each input channel contributes through multiple learned weights due to the spatial extent of the kernels. 
(right) Model projection: The node has exactly one trainable weight per input channel. Spatial structure is preserved by fixed sub-functions, while channel interaction is reduced to a linear weighted sum. The resulting computation is structurally identical to an FFN node with processed tensors as inputs, suitable for PEFT.}
  \label{fig:projExplainer}
  \vspace{-0.5cm}
\end{figure*}

In order to test the usefulness of model projection for transfer learning, we implemented model projection for 2D convolutional layers as a modified version of a 2D convolutional layer. The main difference is that it also includes a new set of parameters, $\gamma_{jk}$, one for each channel, $k$, in each convolutional node, $j$. We also fix the filter weights, while the new parameters and biases are trained.
Each $\gamma_{jk}$ is initialized to one, so that the first forward pass through the network will be the same as before model projection.
Projected nodes learn a linear regression before $\sigma$ is applied, but unlike a GFFN node, projected nodes also perform an extra pre-processing step to their input. 
Intuitively, our specific design choice is intended to implement Theorem 3.11, allowing us to turn a CNN into a kind of FFN. While other freezing techniques can reduce total trainable parameters, ours keeps the trainable part in a FFN node design, which is already known to be successful. This should not only allow us to benefit from our demonstrated property, but also benefit from other strengths of FFNs.
As stated earlier, one advantage of projecting nodes is that it allows a node to be trained with fewer parameters, reducing the amount of data needed. 
We explore this idea in Section~\ref{sec:ex}.



\section{Experiments}\label{sec:ex}
We now demonstrate the utility of inheritance via projection for transfer learning: 
training nodes with only one parameter per input channel. We focus on transfer learning large pre-trained foundation CNN models onto downstream tasks. Normally, CNN nodes can have many parameters per input channel. By projecting these models, we can train an entire model using many fewer parameters, which should be advantageous when 
there is not enough data to fully train these large networks.

We conduct two sets of experiments. The first compares model projection with full fine-tuning and logistic regression 
, with an emphasis on analyzing training behavior. The second evaluates model projection on standard benchmarks and compares its performance to state-of-the-art 
PEFT methods.

\subsection{First Experiment: Understanding Model Projection Behavior}

We conduct transfer learning experiments on seven benchmark datasets: Food-101 \cite{bossard14}, CIFAR-10 \cite{Krizhevsky09learningmultiple}, CIFAR-100 \cite{Krizhevsky09learningmultiple}, Stanford Dogs \cite{stanford_dogs_FGVC2011}, Oxford-IIIT Pets \cite{parkhi12a}, Caltech-101 \cite{FeiFei2004LearningGV}, and Oxford 102 Flowers \cite{Nilsback08}. These datasets range in size from about 2,000 to about 75,000 training examples and span both coarse-grained and fine-grained image classification tasks. See Appendix Table~\ref{tab:datasets_lines}.
Our focus is on showing a proof of concept of our method rather than pushing state-of-the-art results, so we intentionally use simple training setups without hyperparameter optimization.


\textbf{General setup.} We perform experiments using three different convolutional bases, with weights pre-trained on ImageNet\cite{deng2009imagenet}: VGG16\cite{simonyan2015very}, ResNet50\cite{he2016deep}, and DenseNet121\cite{huang2017densely}. After each convolutional base, we add a global average pooling layer, a dropout layer set to 0.5, and finally a single dense layer for predictions. We use simple data augmentations: horizontal flip with probability 0.5, and random rotation and zoom both set to 0.1. Input images were set to 224 x 224 pixels.
We do not shuffle data or address class imbalances. Nor do we address noisy labels or perform any other type of regularization. Any validation sets are absorbed into training sets. 
We evaluate two training setups:

\paragraph{Single-stage training.}
For each model, we run three variants:
Logistic regression (LR), where 
only the final classifier is trained;
Full fine-tuning (FT), where all parameters are trained; and
model projection, where convolutional kernels are frozen and 
 all other parameters, including projection parameters, are trained.
All runs use Adam~\cite{kingma2014adam} with default hyperparameters for 20 epochs.

\paragraph{Two-stage training.}
This setup uses the same models but trains in two stages.
Stage 1 runs for 7 epochs with Adam using default hyperparameters.
Stage 2 runs for 13 epochs with SGD, learning rate $10^{-4}$, momentum 0.9, following the Keras Applications fine-tuning example \cite{chollet2015keras}.
Under this setup, we evaluate three combinations:
Logistic regression in Stage 1, then full fine-tuning in Stage 2 (LR+FT);
model  projection in Stage 1, then full fine-tuning in Stage 2 (Model projection + FT); and
model projection in both stages (2 step model Projection).

We perform the second setup because we observed full fine-tuning occasionally failed to learn in single-stage training, particularly for VGG16.
Two-stage training provides a stronger full fine-tuning baseline and allows us to study projection
as an alternative warm start to logistic regression training and/or an initialization before full fine-tuning.

\begin{figure*}[t]
    
\centering

\begin{tabular}{lccc}
& \textbf{CIFAR 10} & \textbf{CIFAR 100} & \textbf{Oxford Flowers} \\ 
\textbf{\rotatebox{90}{VGG16}} & \includegraphics[width=0.25\textwidth]{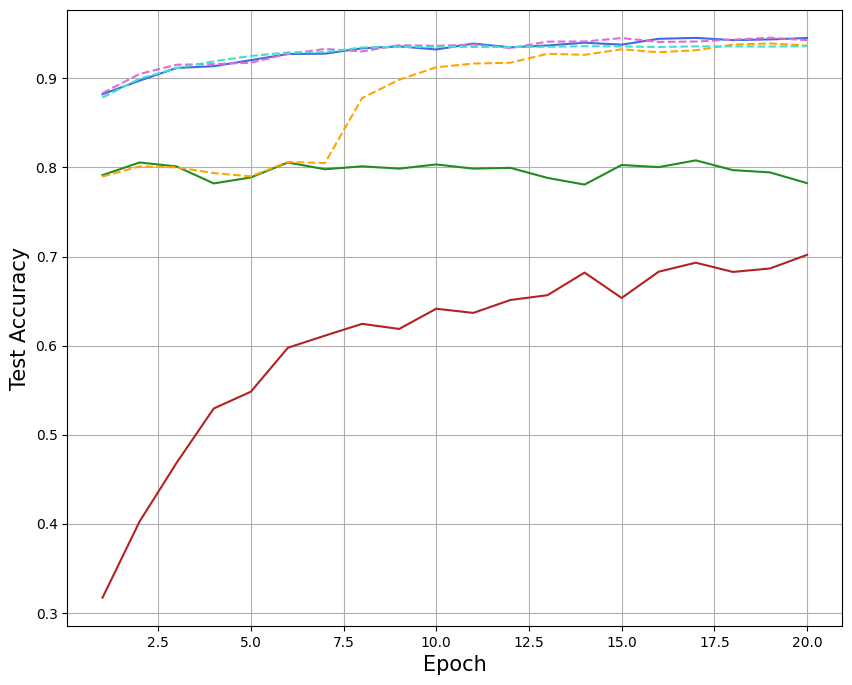} & \includegraphics[width=0.25\textwidth]{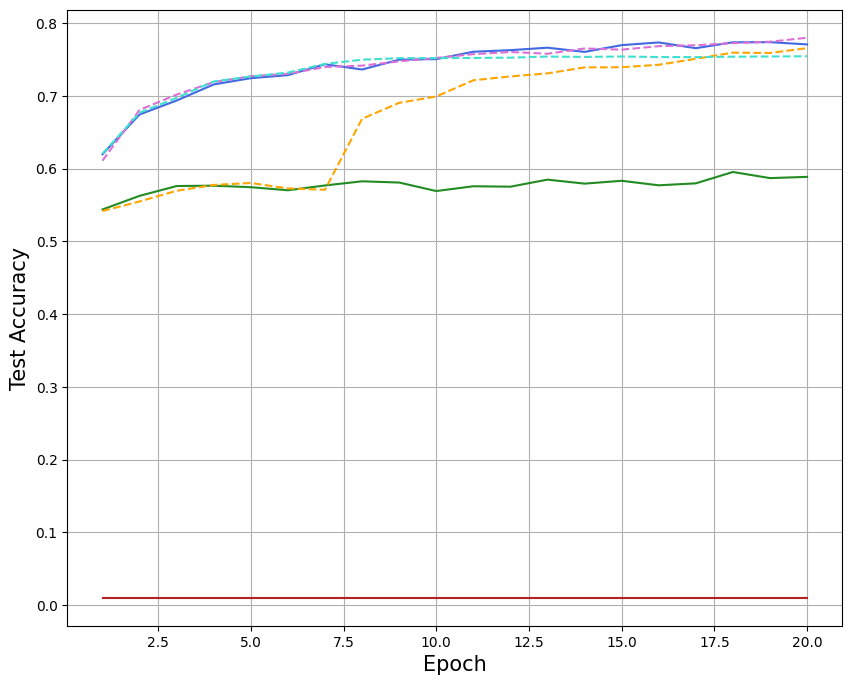} & \includegraphics[width=0.25\textwidth]{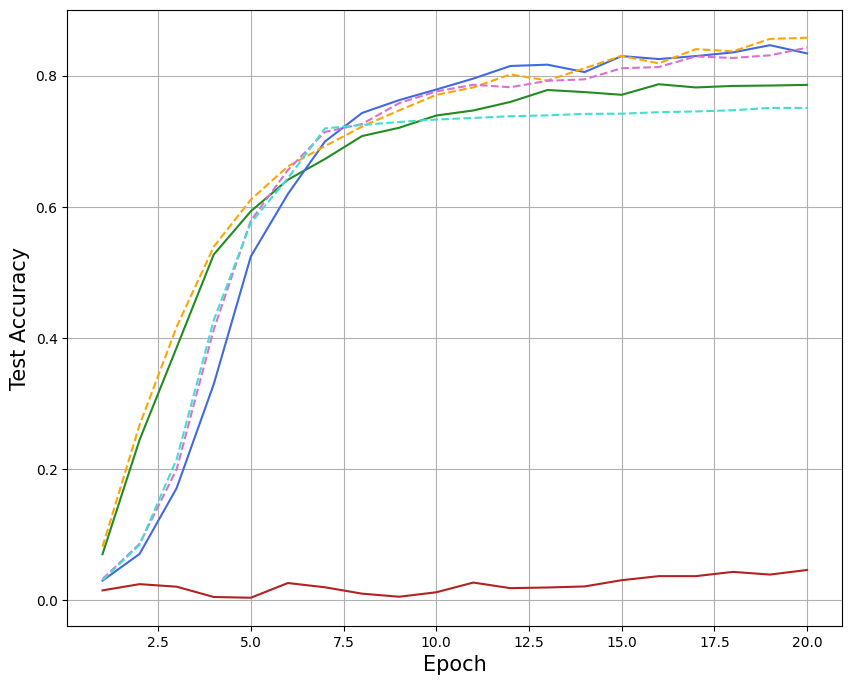} \\ 
\textbf{\rotatebox{90}{ResNet50}} & \includegraphics[width=0.25\textwidth]{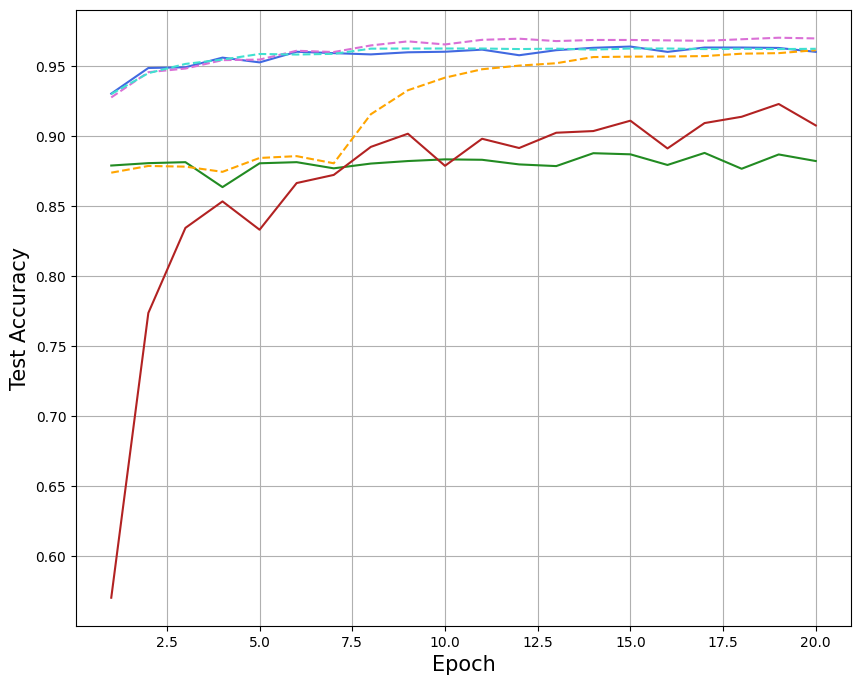} & \includegraphics[width=0.25\textwidth]{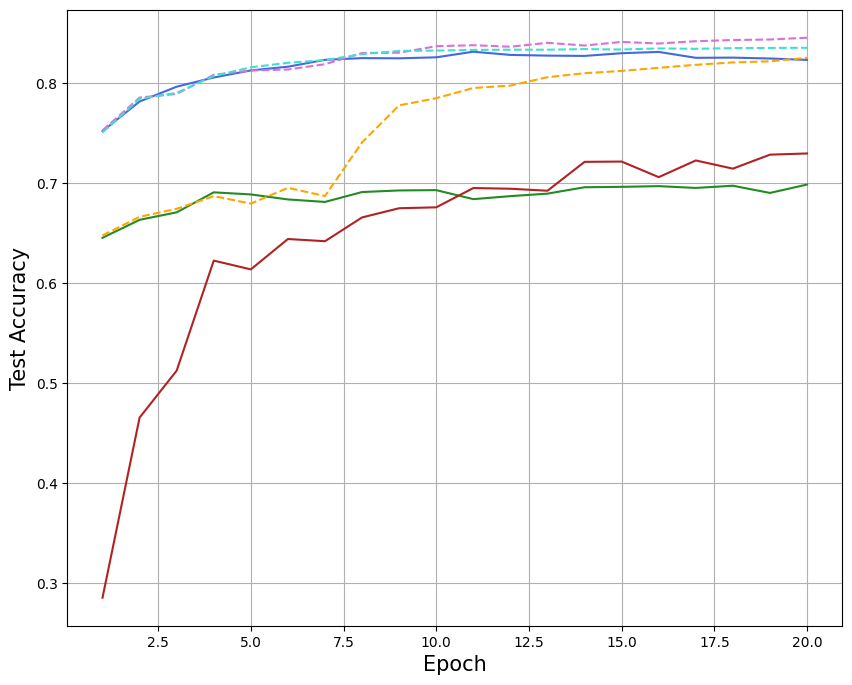} & \includegraphics[width=0.25\textwidth]{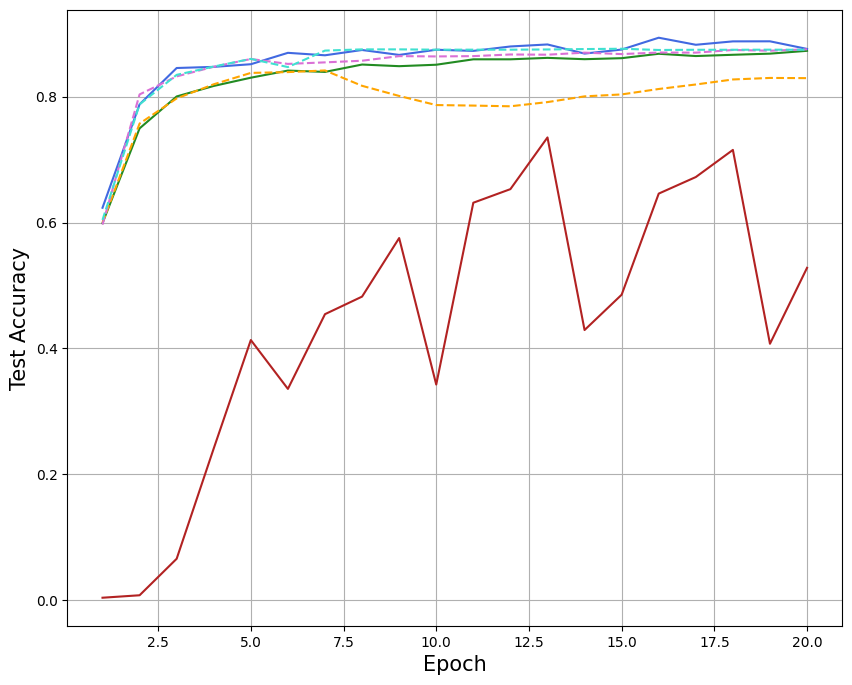} \\ 
\textbf{\rotatebox{90}{DenseNet121}} & \includegraphics[width=0.25\textwidth]{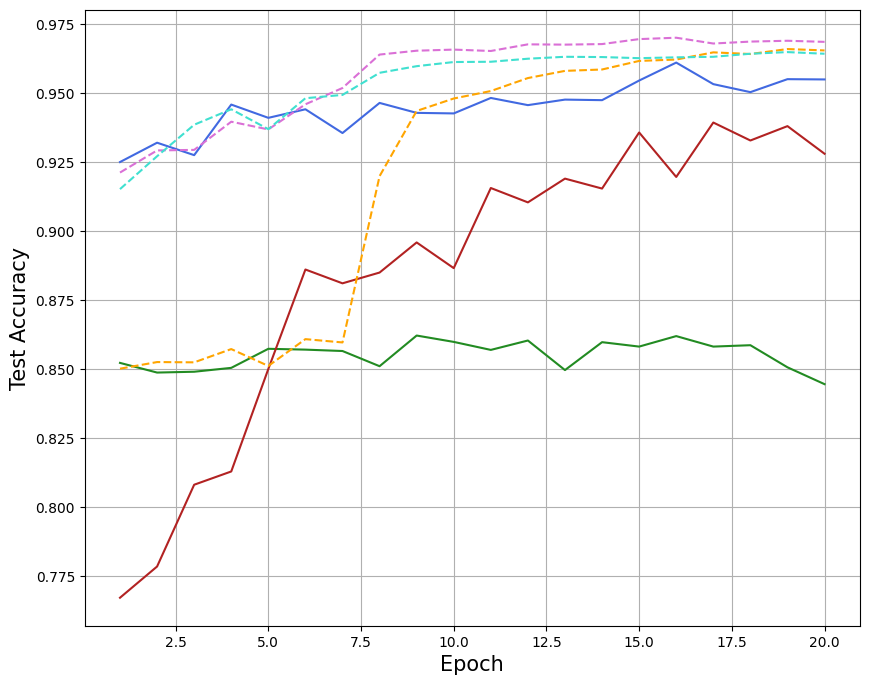} & \includegraphics[width=0.25\textwidth]{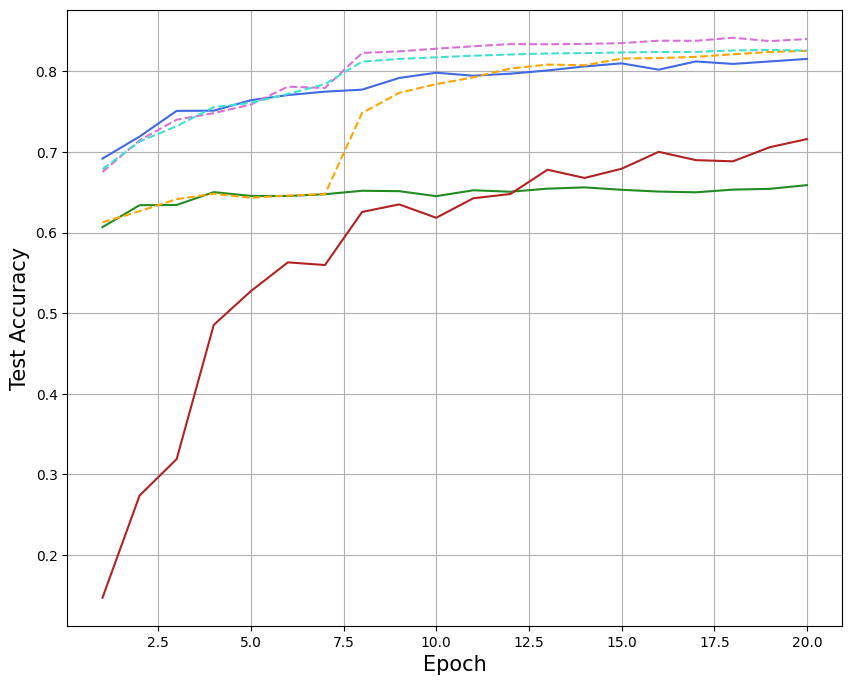} & \includegraphics[width=0.25\textwidth]{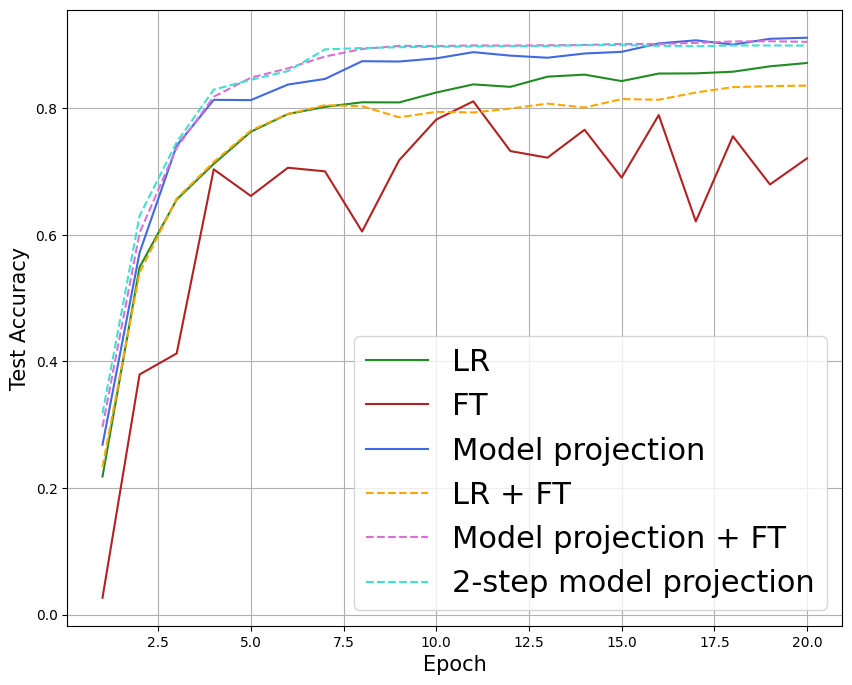} \\ 
\end{tabular}
\caption{Select results from the first experiments. 
The solid lines show the single stage setup, while the dashed lines show the two stage setup.
Each row corresponds to a particular convolutional base, organized from oldest to most recent. The columns show the results on CIFAR 10, CIFAR 100, and Oxford flowers respectively. In all charts, the x-axis is epochs, and the y-axis is test accuracy.}\label{fig:mainresults}
\end{figure*}

\begin{table}[!t]
\caption{Average accuracy and (standard deviation) of our method on the CIFAR 10, CIFAR 100, and Flowers datasets, compared to results from other recent methods, all using ResNet50. Values taken from other papers (LoRA-C \cite{ding2024lora} and PaRA, LoRA-r8, LoRA-r32 \cite{HEDEGAARD2024110724}) are marked with *. The best averages are bolded, along with any averages that are within two standard deviations of the best result.}
\label{tab:peft_baselines}
\centering
\resizebox{\textwidth}{!}{
\begin{tabular}{lccccc}
\hline
Method & \textbf{C10}& \textbf{C100} & \textbf{Flowers} \\ \hline
\textbf{LR}  
&  87.90 (0.621)  
&   69.73 (0.137)   
& \textbf{87.22} (0.161) \\ 

\textbf{Batch Normalization}  
&  95.41 (0.152)  
&   81.21 (0.099)   
& \textbf{87.81} (0.275) \\ 

\textbf{BitFit}  
&  95.49 (0.168)  
&  81.28 (0.173)    
& \textbf{88.31} (0.835) \\ 

\textbf{FT}  
&   91.67 (0.692) 
&  72.36 (0.807)    
& 63.60 (8.961) \\ 

\textbf{LR + FT}  
&  96.18 (0.166)  
&   82.57 (0.248)   
& 83.26 (0.253) \\ 
\hline

\textbf{Model projection}  
& 96.13 (0.118) 
&  82.58 (0.180)   
& \textbf{87.98} (1.049) \\ 

\textbf{Model projection + FT}  
& \textbf{96.83} (0.111) 
& \textbf{84.06} (0.335) 
& \textbf{87.69} (0.306)\\ 

\textbf{2-step model projection}  
&  96.35 (0.193)  
&   \textbf{83.53} (0.054)   
&  \textbf{87.17} (0.386)\\
\hline

\textbf{LoRA-C}   
& 96.59* (N/A*)  
& 82.98* (N/A*)  
& N/A* (N/A*) \\

\textbf{LoRA-r32}   
& 95.32* (0.13*)  
& 79.47* (N/A*)  
&  78.57* (5.93*)\\

\textbf{LoRA-r8} 
& 95.35* (0.10*)  
& 79.22* (N/A*)  
&  80.96* (5.83*)\\

\textbf{PaRA}   
& 93.52* (N/A*)  
& 79.31* (N/A*)  
&  80.65* (N/A*)\\
\hline
\end{tabular}
}
\end{table}

\textbf{Results. } Results are summarized in Figure~\ref{fig:mainresults}.
Overall, model projection consistently outperforms logistic regression and full fine-tuning across both large and small datasets, and remains robust under older architectures and a simple training setup, while still performing well on newer models.
Logistic regression 
typically achieves higher initial accuracy but shows limited improvement over epochs, indicating underfitting. Its performance is competitive on smaller datasets but degrades on larger ones, and it remains relatively robust on older models.
Full fine-tuning starts from lower accuracy and improves over training, sometimes surpassing logistic regression 
but often struggling on smaller datasets or failing to learn for older architectures such as VGG16.
In contrast, model projection combines favorable properties of both approaches: it starts from relatively high accuracy and continues to improve throughout training, achieving the strongest overall performance across experiments.

These results align with what we expect from projection inheritance: strong results from learning an entire neural network, allowing frequent outperformance of 
logistic regression and greater robustness to dataset size compared to full fine tuning due to inheriting fewer trainable parameters.
Model projection retains the expressiveness of end-to-end learning while improving robustness to dataset size and optimization instability.
While we expect full fine-tuning to outperform model projection given sufficiently large datasets, and logistic regression training to dominate in extremely low-data regimes~\cite{kornblith2019better}, our results suggest that model projection performs best across a broad intermediate regime that covers most practical transfer learning scenarios.

In the two-stage setting, the first stage behaves similarly to the corresponding single-stage experiments, as expected.
Notably, performance in the second stage is consistently higher when model projection is used in the first stage, regardless of whether the second stage applies model projection or full fine-tuning.
This suggests that model projection provides a more effective initialization for subsequent fine-tuning.
Overall, model projection yields stronger performance 
and emerges as the most effective strategy in the majority of cases.
Further details and full results can be found in Appendix \ref{supResults}.

\subsection{Second Experiment: Model projection Against PEFT Baselines}
To evaluate model projection against current 
PEFT approaches, we compare to standard transfer baselines and recent PEFT methods on CIFAR-10, CIFAR-100, and Oxford Flowers on pretrained ResNet50 architecture. Our baselines include logistic regression (training only the final classifier) and full fine-tuning, as well as partial-tuning baselines that update only BatchNorm parameters (BN) \citep{ioffe2015batch} along with the final layer, or only bias terms (inspired by BitFit) \citep{zaken2022bitfit} along with BN and the final layer. This gives a set of methods with gradually increasing trainable parameters on the same model. We additionally compare to low-rank adaptation methods LoRA-r8 and LoRA-r32 \citep{hu2022lora}, the Parallel Residual Adapter (PaRA) family \citep{rebuffi2017learning}, and the convolutional variant LoRA-C \citep{ding2024lora}. For LoRA-C we report results from \citet{ding2024lora}, and for LoRA-r8/LoRA-r32 and PaRA we use the ResNet50 results reported by \citet{HEDEGAARD2024110724}. We include all two-stage variants considered previously. We report the average accuracy across 5 seeds, along with the standard deviation. 

\textbf{Results. }Table~\ref{tab:peft_baselines} summarizes the results.
Both single-stage and two-stage model projection are highly competitive across all datasets when compared to PEFT baselines.
In particular, the two-stage setup that uses model projection in the first stage, either with full fine tuning or model projection in the second stage, achieves the best performance on CIFAR-10 and CIFAR-100, while all projected models are within two standard deviations of the best performance on Flowers.
These results support the view that model projection provides an effective initialization, especially for subsequent full fine-tuning, by guiding optimization toward more favorable regions of the loss landscape.

\subsection*{Conclusions.} In this paper we formalized a general version of the operations of nodes from traditional neural networks and convolutional neural networks. We used this formalization to prove that FFNs can inherit from CNNs, but that CNNs cannot inherit from FFNs.
We proposed \emph{model projection}, a method to project a CNN node onto an FFN node, and proved that this method allows a CNN node to inherit any results or techniques from FFN nodes that do not require homogeneous inputs.
We then demonstrated a proof of concept of our proposed projection method in a transfer learning example. Our method had strong results in a simple setting, reinforcing our theoretical findings. Our example is a good application of projection that we expect will be useful for simple home made models, and may also be useful in more heavily engineered settings as well.
There are a number of potential directions for future research. Extending analysis of inheritance between more model types such as graph neural networks and transformer models is particularly interesting. More results from FFNs can be inherited to CNNs, and exploration of some of these methods such as lasso artificial neural networks \cite{ma2022phase} would be interesting to experiment with. Additionally, further exploration of our example experiment could be done in less simple settings.



\bibliography{references.bib}

\begin{thebibliography}{36}
\providecommand{\natexlab}[1]{#1}
\providecommand{\url}[1]{\texttt{#1}}
\expandafter\ifx\csname urlstyle\endcsname\relax
  \providecommand{\doi}[1]{doi: #1}\else
  \providecommand{\doi}{doi: \begingroup \urlstyle{rm}\Url}\fi

\bibitem[Abadi et~al.(2015)Abadi, Agarwal, Barham, Brevdo, Chen, Citro, Corrado, Davis, Dean, Devin, Ghemawat, Goodfellow, Harp, Irving, Isard, Jia, Jozefowicz, Kaiser, Kudlur, Levenberg, Man\'{e}, Monga, Moore, Murray, Olah, Schuster, Shlens, Steiner, Sutskever, Talwar, Tucker, Vanhoucke, Vasudevan, Vi\'{e}gas, Vinyals, Warden, Wattenberg, Wicke, Yu, and Zheng]{tensorflow2015-whitepaper}
M.~Abadi, A.~Agarwal, P.~Barham, E.~Brevdo, Z.~Chen, C.~Citro, G.~S. Corrado, A.~Davis, J.~Dean, M.~Devin, S.~Ghemawat, I.~Goodfellow, A.~Harp, G.~Irving, M.~Isard, Y.~Jia, R.~Jozefowicz, L.~Kaiser, M.~Kudlur, J.~Levenberg, D.~Man\'{e}, R.~Monga, S.~Moore, D.~Murray, C.~Olah, M.~Schuster, J.~Shlens, B.~Steiner, I.~Sutskever, K.~Talwar, P.~Tucker, V.~Vanhoucke, V.~Vasudevan, F.~Vi\'{e}gas, O.~Vinyals, P.~Warden, M.~Wattenberg, M.~Wicke, Y.~Yu, and X.~Zheng.
\newblock {TensorFlow}: Large-scale machine learning on heterogeneous systems, 2015.
\newblock URL \url{https://www.tensorflow.org/}.
\newblock Software available from tensorflow.org.

\bibitem[Ansel et~al.(2024)Ansel, Yang, He, Gimelshein, Jain, Voznesensky, Bao, Bell, Berard, Burovski, Chauhan, Chourdia, Constable, Desmaison, DeVito, Ellison, Feng, Gong, Gschwind, Hirsh, Huang, Kalambarkar, Kirsch, Lazos, Lezcano, Liang, Liang, Lu, Luk, Maher, Pan, Puhrsch, Reso, Saroufim, Siraichi, Suk, Suo, Tillet, Wang, Wang, Wen, Zhang, Zhao, Zhou, Zou, Mathews, Chanan, Wu, and Chintala]{Ansel_PyTorch_2_Faster_2024}
J.~Ansel, E.~Yang, H.~He, N.~Gimelshein, A.~Jain, M.~Voznesensky, B.~Bao, P.~Bell, D.~Berard, E.~Burovski, G.~Chauhan, A.~Chourdia, W.~Constable, A.~Desmaison, Z.~DeVito, E.~Ellison, W.~Feng, J.~Gong, M.~Gschwind, B.~Hirsh, S.~Huang, K.~Kalambarkar, L.~Kirsch, M.~Lazos, M.~Lezcano, Y.~Liang, J.~Liang, Y.~Lu, C.~Luk, B.~Maher, Y.~Pan, C.~Puhrsch, M.~Reso, M.~Saroufim, M.~Y. Siraichi, H.~Suk, M.~Suo, P.~Tillet, E.~Wang, X.~Wang, W.~Wen, S.~Zhang, X.~Zhao, K.~Zhou, R.~Zou, A.~Mathews, G.~Chanan, P.~Wu, and S.~Chintala.
\newblock {PyTorch 2: Faster Machine Learning Through Dynamic Python Bytecode Transformation and Graph Compilation}.
\newblock In \emph{29th ACM International Conference on Architectural Support for Programming Languages and Operating Systems, Volume 2 (ASPLOS '24)}. ACM, Apr. 2024.
\newblock \doi{10.1145/3620665.3640366}.
\newblock URL \url{https://docs.pytorch.org/assets/pytorch2-2.pdf}.

\bibitem[Bossard et~al.(2014)Bossard, Guillaumin, and Van~Gool]{bossard14}
L.~Bossard, M.~Guillaumin, and L.~Van~Gool.
\newblock Food-101 -- mining discriminative components with random forests.
\newblock In \emph{European Conference on Computer Vision}, 2014.

\bibitem[Chen et~al.(2024)Chen, Tao, Zhang, Wang, Li, Ye, Wang, Hu, and Savvides]{chen2024conv}
H.~Chen, R.~Tao, H.~Zhang, Y.~Wang, X.~Li, W.~Ye, J.~Wang, G.~Hu, and M.~Savvides.
\newblock Conv-adapter: Exploring parameter efficient transfer learning for convnets.
\newblock In \emph{Proceedings of the IEEE/CVF conference on computer vision and pattern recognition}, pages 1551--1561, 2024.

\bibitem[Chollet(2021)]{chollet2021deep}
F.~Chollet.
\newblock \emph{Deep Learning with Python}.
\newblock Manning, 2021.

\bibitem[Chollet et~al.(2015)]{chollet2015keras}
F.~Chollet et~al.
\newblock Keras.
\newblock \url{https://keras.io}, 2015.

\bibitem[Deng et~al.(2009)Deng, Dong, Socher, Li, Li, and Fei-Fei]{deng2009imagenet}
J.~Deng, W.~Dong, R.~Socher, L.-J. Li, K.~Li, and L.~Fei-Fei.
\newblock Imagenet: A large-scale hierarchical image database.
\newblock In \emph{2009 IEEE conference on computer vision and pattern recognition}, pages 248--255. Ieee, 2009.

\bibitem[Ding et~al.(2024)Ding, Cao, Xie, Fan, Wang, and Lu]{ding2024lora}
C.~Ding, X.~Cao, J.~Xie, L.~Fan, S.~Wang, and Z.~Lu.
\newblock Lora-c: Parameter-efficient fine-tuning of robust cnn for iot devices.
\newblock \emph{arXiv preprint arXiv:2410.16954}, 2024.

\bibitem[Fei-Fei et~al.(2004)Fei-Fei, Fergus, and Perona]{FeiFei2004LearningGV}
L.~Fei-Fei, R.~Fergus, and P.~Perona.
\newblock Learning generative visual models from few training examples: An incremental bayesian approach tested on 101 object categories.
\newblock \emph{Computer Vision and Pattern Recognition Workshop}, 2004.

\bibitem[Goodfellow et~al.(2016)Goodfellow, Bengio, and Courville]{goodfellow2016deep}
I.~Goodfellow, Y.~Bengio, and A.~Courville.
\newblock \emph{Deep learning}.
\newblock MIT press, 2016.

\bibitem[Han et~al.(2024)Han, Wang, Cui, Wang, Huang, Qi, and Liu]{han2024facing}
C.~Han, Q.~Wang, Y.~Cui, W.~Wang, L.~Huang, S.~Qi, and D.~Liu.
\newblock Facing the elephant in the room: Visual prompt tuning or full finetuning?
\newblock In \emph{The Twelfth International Conference on Learning Representations}, 2024.

\bibitem[Hastie et~al.(2009)Hastie, Tibshirani, and Friedman]{hastieESL}
T.~Hastie, R.~Tibshirani, and J.~H. Friedman.
\newblock \emph{The elements of statistical learning: data mining, inference, and prediction}, volume~2.
\newblock Springer, 2009.

\bibitem[He et~al.(2023)He, Cai, Zhang, Tao, and Zhuang]{he2023sensitivity}
H.~He, J.~Cai, J.~Zhang, D.~Tao, and B.~Zhuang.
\newblock Sensitivity-aware visual parameter-efficient fine-tuning.
\newblock In \emph{Proceedings of the IEEE/CVF International Conference on Computer Vision}, pages 11825--11835, 2023.

\bibitem[He et~al.(2016)He, Zhang, Ren, and Sun]{he2016deep}
K.~He, X.~Zhang, S.~Ren, and J.~Sun.
\newblock Deep residual learning for image recognition.
\newblock In \emph{Proceedings of the IEEE conference on computer vision and pattern recognition}, pages 770--778, 2016.

\bibitem[Hedegaard et~al.(2024)Hedegaard, Alok, Jose, and Iosifidis]{HEDEGAARD2024110724}
L.~Hedegaard, A.~Alok, J.~Jose, and A.~Iosifidis.
\newblock Structured pruning adapters.
\newblock \emph{Pattern Recognition}, 156:\penalty0 110724, 2024.
\newblock ISSN 0031-3203.
\newblock \doi{https://doi.org/10.1016/j.patcog.2024.110724}.
\newblock URL \url{https://www.sciencedirect.com/science/article/pii/S0031320324004758}.

\bibitem[Howard et~al.(2017)Howard, Zhu, Chen, Kalenichenko, Wang, Weyand, Andreetto, and Adam]{howard2017mobilenets}
A.~G. Howard, M.~Zhu, B.~Chen, D.~Kalenichenko, W.~Wang, T.~Weyand, M.~Andreetto, and H.~Adam.
\newblock Mobilenets: Efficient convolutional neural networks for mobile vision applications.
\newblock \emph{arXiv preprint arXiv:1704.04861}, 2017.

\bibitem[Hu et~al.(2022)Hu, Shen, Wallis, Allen-Zhu, Li, Wang, Wang, Chen, et~al.]{hu2022lora}
E.~J. Hu, Y.~Shen, P.~Wallis, Z.~Allen-Zhu, Y.~Li, S.~Wang, L.~Wang, W.~Chen, et~al.
\newblock Lora: Low-rank adaptation of large language models.
\newblock \emph{ICLR}, 1\penalty0 (2):\penalty0 3, 2022.

\bibitem[Huang et~al.(2017)Huang, Liu, Van Der~Maaten, and Weinberger]{huang2017densely}
G.~Huang, Z.~Liu, L.~Van Der~Maaten, and K.~Q. Weinberger.
\newblock Densely connected convolutional networks.
\newblock In \emph{Proceedings of the IEEE conference on computer vision and pattern recognition}, pages 4700--4708, 2017.

\bibitem[Ioffe and Szegedy(2015)]{ioffe2015batch}
S.~Ioffe and C.~Szegedy.
\newblock Batch normalization: Accelerating deep network training by reducing internal covariate shift.
\newblock In \emph{International conference on machine learning}, pages 448--456. pmlr, 2015.

\bibitem[Jia et~al.(2022)Jia, Tang, Chen, Cardie, Belongie, Hariharan, and Lim]{jia2022visual}
M.~Jia, L.~Tang, B.-C. Chen, C.~Cardie, S.~Belongie, B.~Hariharan, and S.-N. Lim.
\newblock Visual prompt tuning.
\newblock In \emph{European conference on computer vision}, pages 709--727. Springer, 2022.

\bibitem[Khosla et~al.(2011)Khosla, Jayadevaprakash, Yao, and Fei-Fei]{stanford_dogs_FGVC2011}
A.~Khosla, N.~Jayadevaprakash, B.~Yao, and L.~Fei-Fei.
\newblock Novel dataset for fine-grained image categorization.
\newblock In \emph{First Workshop on Fine-Grained Visual Categorization, IEEE Conference on Computer Vision and Pattern Recognition}, Colorado Springs, CO, June 2011.

\bibitem[Kingma(2015)]{kingma2014adam}
D.~P. Kingma.
\newblock Adam: A method for stochastic optimization.
\newblock In \emph{International Conference on Learning Representations (ICLR 2015)}, 2015.

\bibitem[Kornblith et~al.(2019)Kornblith, Shlens, and Le]{kornblith2019better}
S.~Kornblith, J.~Shlens, and Q.~V. Le.
\newblock Do better imagenet models transfer better?
\newblock In \emph{Proceedings of the IEEE/CVF conference on computer vision and pattern recognition}, pages 2661--2671, 2019.

\bibitem[Krizhevsky(2009)]{Krizhevsky09learningmultiple}
A.~Krizhevsky.
\newblock Learning multiple layers of features from tiny images.
\newblock Technical report, University of Toronto, 2009.

\bibitem[LeCun et~al.(2010)LeCun, Kavukcuoglu, and Farabet]{lecun2010convolutional}
Y.~LeCun, K.~Kavukcuoglu, and C.~Farabet.
\newblock Convolutional networks and applications in vision.
\newblock In \emph{Proceedings of 2010 IEEE international symposium on circuits and systems}, pages 253--256. IEEE, 2010.

\bibitem[Lian et~al.(2022)Lian, Zhou, Feng, and Wang]{lian2022scaling}
D.~Lian, D.~Zhou, J.~Feng, and X.~Wang.
\newblock Scaling \& shifting your features: A new baseline for efficient model tuning.
\newblock \emph{Advances in Neural Information Processing Systems}, 35:\penalty0 109--123, 2022.

\bibitem[Linse et~al.(2023)Linse, Barth, and Martinetz]{linse2023convolutional}
C.~Linse, E.~Barth, and T.~Martinetz.
\newblock Convolutional neural networks do work with pre-defined filters.
\newblock In \emph{2023 International Joint Conference on Neural Networks (IJCNN)}, pages 1--8. IEEE, 2023.

\bibitem[Linse et~al.(2024)Linse, Br{\"u}ckner, and Martinetz]{linse2024enhancing}
C.~Linse, B.~Br{\"u}ckner, and T.~Martinetz.
\newblock Enhancing generalization in convolutional neural networks through regularization with edge and line features.
\newblock In \emph{International Conference on Artificial Neural Networks}, pages 432--446. Springer, 2024.

\bibitem[Luo et~al.(2023)Luo, Huang, Zhou, Sun, Jiang, Wang, and Ji]{luo2023towards}
G.~Luo, M.~Huang, Y.~Zhou, X.~Sun, G.~Jiang, Z.~Wang, and R.~Ji.
\newblock Towards efficient visual adaption via structural re-parameterization.
\newblock \emph{arXiv preprint arXiv:2302.08106}, 2023.

\bibitem[Ma et~al.(2022)Ma, Sardy, Hengartner, Bobenko, and Lin]{ma2022phase}
X.~Ma, S.~Sardy, N.~Hengartner, N.~Bobenko, and Y.~T. Lin.
\newblock A phase transition for finding needles in nonlinear haystacks with lasso artificial neural networks.
\newblock \emph{Statistics and Computing}, 32\penalty0 (6):\penalty0 99, 2022.

\bibitem[Nilsback and Zisserman(2008)]{Nilsback08}
M.-E. Nilsback and A.~Zisserman.
\newblock Automated flower classification over a large number of classes.
\newblock In \emph{Proceedings of the Indian Conference on Computer Vision, Graphics and Image Processing}, Dec 2008.

\bibitem[Parkhi et~al.(2012)Parkhi, Vedaldi, Zisserman, and Jawahar]{parkhi12a}
O.~M. Parkhi, A.~Vedaldi, A.~Zisserman, and C.~V. Jawahar.
\newblock Cats and dogs.
\newblock In \emph{IEEE Conference on Computer Vision and Pattern Recognition}, 2012.

\bibitem[Rebuffi et~al.(2017)Rebuffi, Bilen, and Vedaldi]{rebuffi2017learning}
S.-A. Rebuffi, H.~Bilen, and A.~Vedaldi.
\newblock Learning multiple visual domains with residual adapters.
\newblock \emph{Advances in neural information processing systems}, 30, 2017.

\bibitem[Simonyan and Zisserman(2015)]{simonyan2015very}
K.~Simonyan and A.~Zisserman.
\newblock Very deep convolutional networks for large-scale image recognition.
\newblock In \emph{3rd International Conference on Learning Representations (ICLR 2015)}. Computational and Biological Learning Society, 2015.

\bibitem[Wang et~al.(2023)Wang, Wang, Wang, Li, Da, Liu, Gao, Shen, He, Shen, et~al.]{wang2023real}
D.~Wang, X.~Wang, L.~Wang, M.~Li, Q.~Da, X.~Liu, X.~Gao, J.~Shen, J.~He, T.~Shen, et~al.
\newblock A real-world dataset and benchmark for foundation model adaptation in medical image classification.
\newblock \emph{Scientific Data}, 10\penalty0 (1):\penalty0 574, 2023.

\bibitem[Zaken et~al.(2022)Zaken, Goldberg, and Ravfogel]{zaken2022bitfit}
E.~B. Zaken, Y.~Goldberg, and S.~Ravfogel.
\newblock Bitfit: Simple parameter-efficient fine-tuning for transformer-based masked language-models.
\newblock In \emph{Proceedings of the 60th Annual Meeting of the Association for Computational Linguistics (Volume 2: Short Papers)}, pages 1--9, 2022.

\end{thebibliography}

 \newpage
\appendix
\onecolumn
\section{Supplementary proofs}\label{supProofs}
\begin{proof}[Proof of Theorem~\ref{subsetsMain}]
We will prove the theorem by showing a bijection between GFFNs and a subset of GCNNs. 
Suppose that we have a GCNN node where $F_{jk}$ has size $1$ in all dimensions. 
This means $F_{jk}$ is a scalar, and $F_{j}$ is a vector of scalars. As a result, $Z_{k} \ast F_{jk}=Z_{k} F_{jk}$. 
Using this fact, and the definition of GCNN nodes, we can write:
\begin{align*}
f_{j}(Z, W_{j}) = \sigma(Z \ast F_{j} + b_{j}\Gamma_{D_{out}}) = \sigma(\sum_{k=1}^{d} Z_{k} \ast F_{jk} + b_{j}\Gamma_{D_{out}}) = \sigma(\sum_{k=1}^{d} Z_{k} F_{jk} + b_{j}\Gamma_{D_{out}}).
\end{align*}
Note that $\sum_{k=1}^{d} Z_{k} F_{jk}=Z \odot_t F_{j}$ by definition. 
As a result, we have that 
$$\sigma(\sum_{k=1}^{d} Z_{k} F_{jk} + b_{j}\Gamma_{D_{out}})=\sigma(Z \odot_t F_{j} + b_{j}\Gamma_{D_{out}}).$$
Observe that the right-hand side is exactly the definition of GFFN node. Every GCNN node with $F_{jk}$ of size $1$ in all dimensions maps to exactly one GFFN node. We now show the inverse. Suppose we have a GFFN node, viz.
\begin{align*}
    f_{j}(Z, W_{j}) &= \sigma(Z \odot_t W_{j} + b_{j}\Gamma_{D_{out}})= \sigma(\sum_{k=1}^{d} Z_{k}  W_{jk} + b_{j}\Gamma_{D_{out}}).
\end{align*}
Note that since $W_{jk}$ is a scalar, $\sum_{k=1}^{d} Z_{k}  W_{jk} = \sum_{k=1}^{d} Z_{k}\ast W_{jk}$. As a result, we have that
\begin{align*}
    \sigma(\sum_{k=1}^{d} Z_{k}  W_{jk} + b_{j}\Gamma_{D_{out}}) = \sigma(\sum_{k=1}^{d} Z_{k} \ast W_{jk} + b_{j}\Gamma_{D_{out}}).
\end{align*}
Observe that the right hand side is exactly the definition of a GCNN node, where the kernels have size $1$ in all dimensions. Every GFFN node maps to exactly one GCNN node with kernels of size $1$.

Therefore, there is a bijection between the subset of GCNNs with kernel size $1$ in all dimensions, and the set of GFFNs. Therefore, GFFNs are a subset of GCNNs.
\end{proof}

\subsection{Projection}

\begin{lemma}\label{separablCNNs}
GCNN node functions are separable by input.
\end{lemma}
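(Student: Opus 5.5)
The plan is to read off the node function of a GCNN node directly from Definition~\ref{gCNN} and match it term by term against the definition of separable by input. By the definition of node function, for a GCNN node $f_j(Z,W_j)=\sigma(Z \ast F_j + b_j\Gamma_{D_{out}})$ the pre-activation function is
\[
f_j'(Z,W_j) = \sum_{k=1}^{d} Z_k \ast F_{jk} + b_j\Gamma_{D_{out}},
\]
where $W_j=\{F_{j1},\dots,F_{jd},b_j\}$. I will then define the candidate sub-functions $f_{jk}'(Z_k,W_{jk}) \coloneqq Z_k \ast F_{jk}$ with parameter subsets $W_{jk}\coloneqq\{F_{jk}\}\subset W_j$. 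With this choice the displayed expression is literally $\sum_{k} f_{jk}'(Z_k,W_{jk}) + b_j\Gamma_{D_{out}}$, which is the required form.

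The remaining step is to verify the two side conditions in the definition. First, each $f_{jk}'$ depends only on $Z_k$ and $F_{jk}$. This holds because the convolution $Z_k \ast F_{jk}$ is computed channel-wise: every entry of the output tensor is a sum of products of entries of $Z_k$ with entries of $F_{jk}$, and no other input channel $Z_{k'}$ or filter channel $F_{jk'}$ appears. Second, the bias term is exactly $b_j\Gamma_{D_{out}}$, as the definition requires. I will also note that the $W_{jk}$ are disjoint subsets of $W_j$, although the definition does not demand this.

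The only point I expect to need care is the claim that the multi-channel convolution $Z \ast F_j$ equals $\sum_k Z_k \ast F_{jk}$, that is, that no cross-channel terms arise. In Definition~\ref{gCNN} this identity is built in, since the second equality there defines $Z\ast F_j$ as the channel sum. Still, I would remark that it agrees with the standard convolution of \eqref{CNN}, because the standard multi-channel convolution sums over the channel index of single-channel convolutions. Given that, the proof is a direct identification with no real obstacle.
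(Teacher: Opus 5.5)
Your proposal is correct and matches the paper's proof. Both read the pre-activation $\sum_{k=1}^{d} Z_k \ast F_{jk} + b_j\Gamma_{D_{out}}$ directly off Definition~\ref{gCNN} and identify $f_{jk}'$ as convolution with filter channel $F_{jk}$, taking $W_{jk}$ to be that channel's weights; your checks of channel-wise dependence and of the channel-sum identity simply spell out what the paper leaves implicit.
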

\begin{proof}
From Definition \ref{gCNN} we have that GCNN node functions can be written as: 
\begin{equation*}
    \sum_{k=1}^{d} Z_{k} \ast F_{jk} + b_{j}\Gamma_{D_{out}} = \sum_{k=1}^{d} f_{jk}'(Z_{k}, W_{jk}) + b_{j}\Gamma_{D_{out}}.
\end{equation*}
 Here, $f_{jk}'$ is a convolution, and $W_{jk}$ are the weights in filter channel $F_{jk}$. 
\end{proof}

\begin{lemma}\label{weightPlacement}
If all node functions are separable by input, and all node sub-functions, $f_{jk}'$, are associative for scalar multiplication, then $\gamma_{jk}$ can be applied either before, or after $f_{jk}'$ acts on $Z_{k}$.
\end{lemma}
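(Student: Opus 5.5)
The plan is to first fix what ``associative for scalar multiplication'' should mean for a node sub-function. I will read it as the requirement that, for every scalar $\gamma\in\mathbb{R}$, every input component $Z_k$ and fixed weights $W_{jk}$,
\[
f_{jk}'(\gamma Z_k, W_{jk}) = \gamma\, f_{jk}'(Z_k, W_{jk}).
\]
That is, scaling the input by $\gamma$ gives the same result as scaling the output by $\gamma$; this is homogeneity of degree one in the input argument. With this reading, the lemma asserts that the two projected nodes
\[
\hat f_j(Z,W_j,\gamma_j)=\sigma\Big(\sum_{k=1}^d \gamma_{jk} f_{jk}'(Z_k,W_{jk})+b_j\Gamma_{D_{out}}\Big)
\]
and
\[
\tilde f_j(Z,W_j,\gamma_j)=\sigma\Big(\sum_{k=1}^d f_{jk}'(\gamma_{jk}Z_k,W_{jk})+b_j\Gamma_{D_{out}}\Big)
\]
are equal as functions of $(Z,\gamma_j)$ for fixed $W_j$.

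The argument proceeds in three steps.
\begin{enumerate}
\item Since every node function is separable by input, each node can be written in the form of Definition~\ref{def_proj}. Its pre-activation is therefore a sum of sub-functions $f_{jk}'$ plus the bias.
\item Apply the associativity hypothesis term by term, replacing each $\gamma_{jk} f_{jk}'(Z_k,W_{jk})$ with $f_{jk}'(\gamma_{jk}Z_k,W_{jk})$. The bias $b_j\Gamma_{D_{out}}$ is unaffected, because $\gamma_{jk}$ never multiplies it.
\item Summing over $k$ shows the two pre-activations agree pointwise, so applying $\sigma$ gives $\hat f_j=\tilde f_j$. The same holds for every node in every projected layer.
\end{enumerate}
I would also note that for GCNN nodes the hypothesis holds, because convolution is linear in its input: $(\gamma Z_k)\ast F_{jk}=\gamma(Z_k\ast F_{jk})$. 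By Lemma~\ref{separablCNNs} this makes the lemma applicable to projected convolutional layers.

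The main obstacle is making the hypothesis precise, since ``associative for scalar multiplication'' is not standard terminology. A related subtlety is ensuring the equivalence holds separately for each output node $j$. When $\gamma_{jk}$ is placed before $f_{jk}'$, the scaled input $\gamma_{jk}Z_k$ depends on $j$, so different nodes of the same layer receive differently scaled copies of $Z_k$. I would remark that this is consistent with Theorem~\ref{projectionToGFFN}: the projected layer has inhomogeneous inputs under either placement. The equivalence is therefore a statement about node functions, not about sharing a single rescaled input across the layer. Finally, I would observe that the choice of placement affects computation and parameter sharing but not the function class realized.
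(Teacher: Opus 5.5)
Your proposal is correct and takes essentially the same route as the paper: separability reduces the claim to each sub-function $f_{jk}'$, and the scalar-associativity hypothesis, which you sensibly make precise as $f_{jk}'(\gamma Z_k, W_{jk}) = \gamma\, f_{jk}'(Z_k, W_{jk})$, gives the equality term by term, just as in the paper's one-line proof. Your extra remarks are accurate: the bias is untouched, the rescaled inputs depend on $j$, and convolution satisfies the hypothesis, the last being precisely the content of Proposition~\ref{gCNNweightPlacement}.
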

\begin{proof}
 Each node is separable by input, therefore, we can just consider each node sub-function. Since all node sub-functions, $f_{jk}'$, are associative for scalar multiplication, then $f_{jk}'(\gamma_{jk} Z_{k}) = \gamma_{jk} f_{jk}'(Z_{k})$.
\end{proof}

\begin{proposition}\label{gCNNweightPlacement}
In a GCNN node, $\gamma_{jk}$ can be applied either before, or after $f_{jk}'$ acts on $Z_{k}$.
\end{proposition}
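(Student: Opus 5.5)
The plan is to obtain Proposition~\ref{gCNNweightPlacement} as a direct instance of Lemma~\ref{weightPlacement}. That lemma needs two hypotheses: every node function is separable by input, and every node sub-function $f_{jk}'$ commutes with scalar multiplication. The first is already available from Lemma~\ref{separablCNNs}, so the only real work is the second hypothesis for GCNN sub-functions.

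From the proof of Lemma~\ref{separablCNNs}, each sub-function of a GCNN node is the convolution $f_{jk}'(Z_k, W_{jk}) = Z_k \ast F_{jk}$. Note that the bias $b_j\Gamma_{D_{out}}$ sits outside the sum of sub-functions, so it does not interfere. The key step is to show that for every $\gamma_{jk}\in\mathbb{R}$,
\[
(\gamma_{jk} Z_k) \ast F_{jk} = \gamma_{jk}\,(Z_k \ast F_{jk}).
\]
I would verify this entrywise from the definition of the (multi-dimensional) convolution. Each output entry at multi-index $\mathbf{i}$ has the form
\[
\sum_{\mathbf{m}} F_{jk}[\mathbf{m}]\, Z_k[\mathbf{i}+\mathbf{m}],
\]
with the usual index convention; flipped or cross-correlation conventions behave identically. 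This expression is a finite linear combination of entries of $Z_k$, so the scalar $\gamma_{jk}$ factors out of the sum by distributivity and commutativity of real multiplication. The same holds if padding or stride is present, since padding with zeros is preserved by scaling and stride only selects entries.

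With this homogeneity established, I would invoke Lemma~\ref{weightPlacement} to conclude that in the projected node
\[
\sum_k \gamma_{jk} f_{jk}'(Z_k, W_{jk}) = \sum_k f_{jk}'(\gamma_{jk} Z_k, W_{jk}),
\]
so $\gamma_{jk}$ may be placed before or after the convolution.

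The main obstacle is only definitional. The paper never fixes the exact convolution convention for general tensor shapes $D$, so the homogeneity step must be phrased so that it covers any convention. I would handle this by stating that any convolution variant, with or without padding, stride or dilation, is a linear map in its input argument $Z_k$ for fixed $F_{jk}$, and linearity implies homogeneity under scalar multiplication.
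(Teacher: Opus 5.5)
Your proposal is correct and matches the paper's proof: the paper also obtains the proposition by combining Lemma~\ref{separablCNNs} with Lemma~\ref{weightPlacement}, and simply asserts that convolutions are ``associative for scalar multiplication,'' which is the homogeneity $(\gamma_{jk}Z_k)\ast F_{jk}=\gamma_{jk}(Z_k\ast F_{jk})$ that you verify entrywise. Your explicit linearity check is a useful addition. Your claim that every padding variant is linear is slightly too broad, though: zero, reflect, or circular padding keeps the map linear, but padding with a nonzero constant would break homogeneity.
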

\begin{proof}
Convolutions are associative for scalar multiplication, therefore, this follows from Lemmas \ref{separablCNNs} and \ref{weightPlacement}.
\end{proof}

\section{Supplementary remarks}\label{supRemarks}
\begin{remark}\label{pre-definedCNN}
    A projected CNN differs from a pre-defined CNN \cite{linse2023convolutional}, where a set of fixed filters produce outputs that are then sent to a layer of 1 x 1 convolutional nodes. Instead, each projected node receives a unique input from a single fixed set of filter channels that have not aggregated their channels.
    Specifically, a Pre-defined Filter Module is similar to a depthwise separable convolution, except they fix the filters in the depthwise part \cite{linse2024enhancing}.
    A depthwise separable convolution has a depthwise convolution layer (a single filter per input channel) followed by BN and RELU, then a 1x1 convolution layer, and another BN and RELU \cite{howard2017mobilenets}. 
    So the PDFM consists of 6 steps: a fixed depthwise convolution, BN, RELU, 1x1 convolution, BN, RELU. In contrast, our method has 2 steps: projected convolution, RELU. There is no BN, and no intermediary RELU. Also, beyond this, PDFMs only use pre-defined edge and line filters, which we do not.
    Furthermore, each 1x1 convolutional node of the depthwise separable receives the same input, whereas each projected node receives a unique input. A depthwise layer has one filter per input channel, where we have one filter for each input channel - output channel pair.
\end{remark}

\section{Supplementary results}\label{supResults}

\begin{table}[t]
    \centering
    \caption{Benchmark datasets.}
    \label{tab:datasets_lines}
    \begin{tabular}{ l c c c }
        \hline 
        \textbf{Dataset} & \textbf{Classes} & \textbf{Train} & \textbf{Test} \\
        \hline 
        Food 101 \nocite{bossard14} & 101 & 75750 & 25250 \\
        CIFAR 10 \nocite{Krizhevsky09learningmultiple}  & 10 & 50000 & 10000 \\
        CIFAR 100 \nocite{Krizhevsky09learningmultiple}  & 100 & 50000 & 10000 \\
        Stanford dogs \nocite{stanford_dogs_FGVC2011}  & 120 & 12000 & 8580 \\
        Oxford IIIT pets \nocite{parkhi12a}  & 37 & 3680 & 3669 \\
        Caltech 101 \nocite{FeiFei2004LearningGV}  & 102 & 3060 & 6084 \\
        Oxford 102 flowers \nocite{Nilsback08}  & 102 & 2040 & 6149 \\
        \hline 
    \end{tabular}
\end{table}

We now discuss the results from the first set of single stage experiments, seen in Figure \ref{fig:1stExp}, with summarizing information presented in Tables \ref{exp1bymethod}, \ref{exp1Bymodel}, and \ref{exp1ByData}.
Figure \ref{fig:1stExp} shows that model projection consistently had strong results, while training was relatively smooth and fast.
Table \ref{exp1bymethod} shows the ranking of each method across each experiment. Model projection significantly outperformed both the logistic regression and full fine tuning in this simple setup, producing the best model about $81\%$ of the time, and the second best about $19\%$ of the time.

Table \ref{exp1Bymodel} shows the rankings achieved by model projection organized by model. Model projection had consistently strong results on each model.
Table \ref{exp1ByData} shows the rankings for model projection grouped by dataset. Model projection had the best scores on all five datasets that were not exclusively pet related, but did not do as well on the two datasets that were exclusively pet related. Interestingly, model projection did well on datasets both larger and smaller than the pet ones, suggesting the issue is not solely dataset size.





\begin{table}[h!]
\centering
\caption{Ranking of each training method in the 21 experiments from the first experimental setup. In a particular experiment (setup, convolutional base, and dataset combination), the method producing the highest test accuracy at the end of the last epoch was designated '1st', the second highest '2nd', and so on.}\label{exp1bymethod}
\begin{tabular}{lccc}
\hline
Method & \textbf{1st} & \textbf{2nd} & \textbf{3rd} \\ \hline
\textbf{Model projection} & 17 & 4 & 0 \\ 
\textbf{Logistic regression} & 4 & 11 & 6 \\ 
\textbf{Full fine tuning} & 0 & 6 & 15 \\ 
\hline
\end{tabular}

\end{table}

\begin{table}[h!]
\centering
\caption{Ranking of Model projection in the 21 experiments from the first experimental setup, organized by convolutional base. In a particular experiment (setup, convolutional base, and dataset combination), the method producing the highest test accuracy at the end of the last epoch was designated '1st', the second highest '2nd', and so on.}\label{exp1Bymodel}
\begin{tabular}{lccc}
\hline
Model & \textbf{1st} & \textbf{2nd} & \textbf{3rd} \\ \hline
\textbf{VGG16} & 6 & 1 & 0 \\ 
\textbf{ResNet50} & 6 & 1 & 0 \\ 
\textbf{DenseNet121} & 5 & 2 & 0 \\ 
\hline
\end{tabular}

\end{table}
\newpage

\begin{table}[h!]
\centering
\caption{Ranking of Model projection in the 21 experiments from the first experimental setup, organized by dataset. In a particular experiment (setup, convolutional base, and dataset combination), the method producing the highest test accuracy at the end of the last epoch was designated '1st', the second highest '2nd', and so on.}\label{exp1ByData}
\begin{tabular}{lccc}
\hline
Dataset & \textbf{1st} & \textbf{2nd} & \textbf{3rd} \\ \hline
\textbf{Food 101} & 3 & 0 & 0 \\ 
\textbf{CIFAR 10} & 3 & 0 & 0 \\ 
\textbf{CIFAR 100} & 3 & 0 & 0 \\ 
\textbf{Stanford dogs} & 1 & 2 & 0 \\ 
\textbf{Oxford IIIT pets} & 1 & 2 & 0 \\
\textbf{Caltech 101} & 3 & 0 & 0 \\ 
\textbf{Oxford 102 flowers} & 3 & 0 & 0 \\ 
\hline
\end{tabular}

\end{table}
\newpage
\begin{figure}[h!]
    \centering
    \caption{Results from the first set of experiments. The blue lines show the performance of model projection. The green and red lines show the performances of the single layer logistic regression, and the full fine tuning respectively. Each column corresponds to a particular convolutional base, organized from oldest to most recent. Each row shows the results on a particular dataset, organized from largest to smallest. In all charts, the x-axis is epochs, and the y-axis is test accuracy.}
    \label{fig:1stExp}
    \newlength{\imagewidth}
    \setlength{\imagewidth}{0.25\textwidth}

    \begin{tabular}{l c c c} 
        & \textbf{VGG16} & \textbf{ResNet50} & \textbf{DenseNet121} \\
        \rule{0pt}{15pt}\\
        
        \textbf{Food} & 
        \includegraphics[width=\imagewidth]{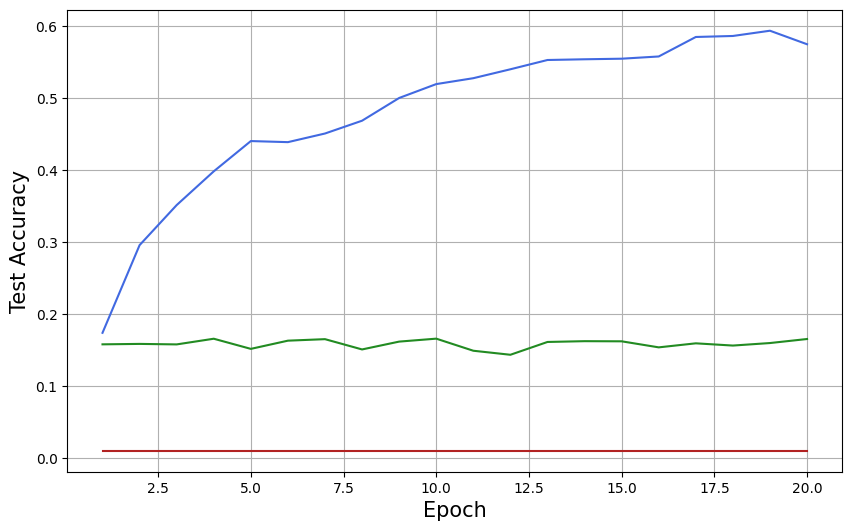} & 
        \includegraphics[width=\imagewidth]{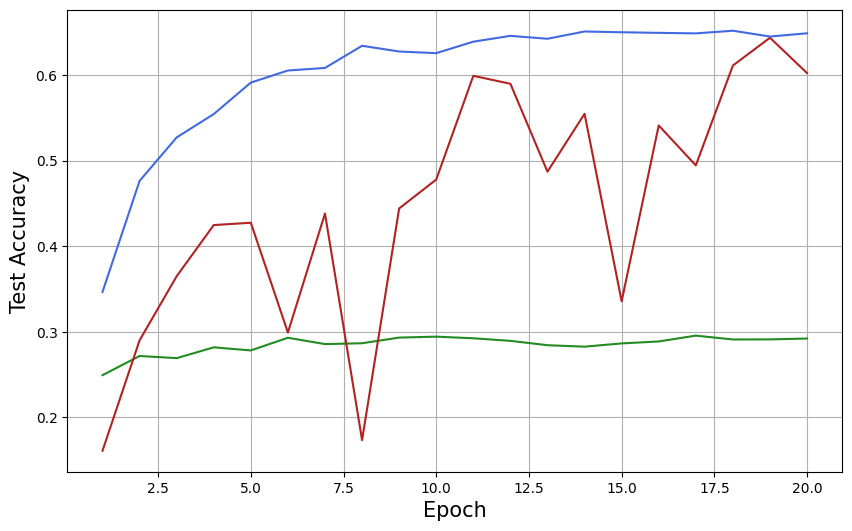} & 
        \includegraphics[width=\imagewidth]{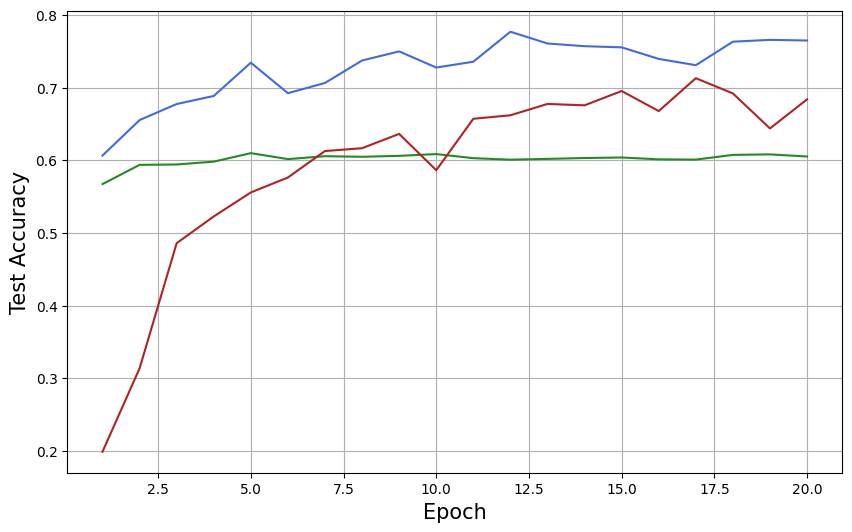} \\
        \rule{0pt}{10pt}\\
    
        \textbf{CIFAR 10} & 
        \includegraphics[width=\imagewidth]{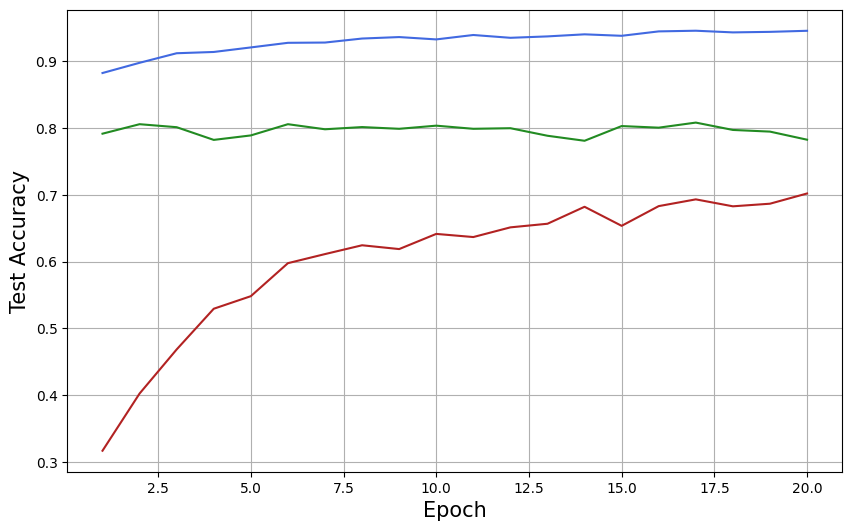} & 
        \includegraphics[width=\imagewidth]{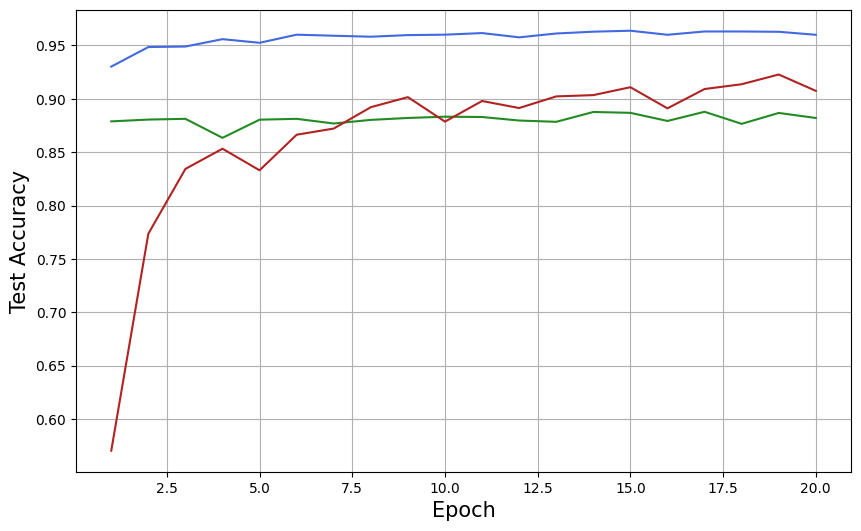} & 
        \includegraphics[width=\imagewidth]{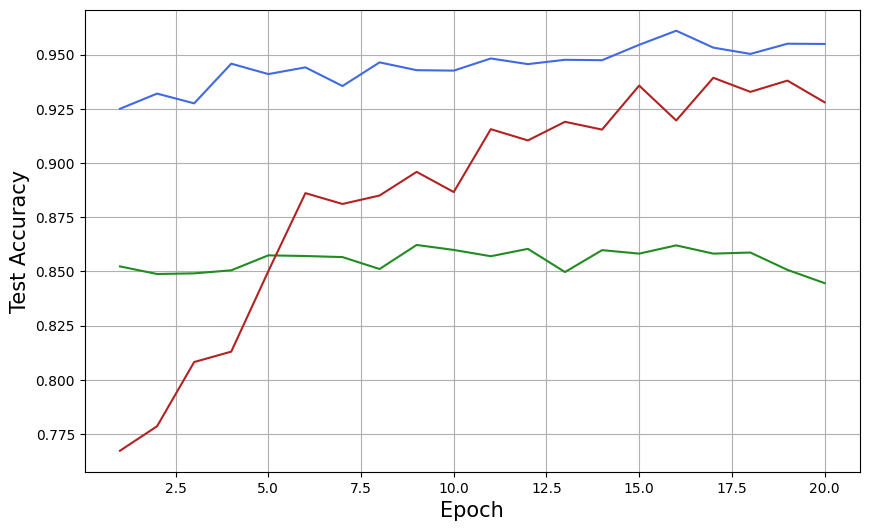} \\
        \rule{0pt}{10pt}\\
    
        \textbf{CIFAR 100} & 
        \includegraphics[width=\imagewidth]{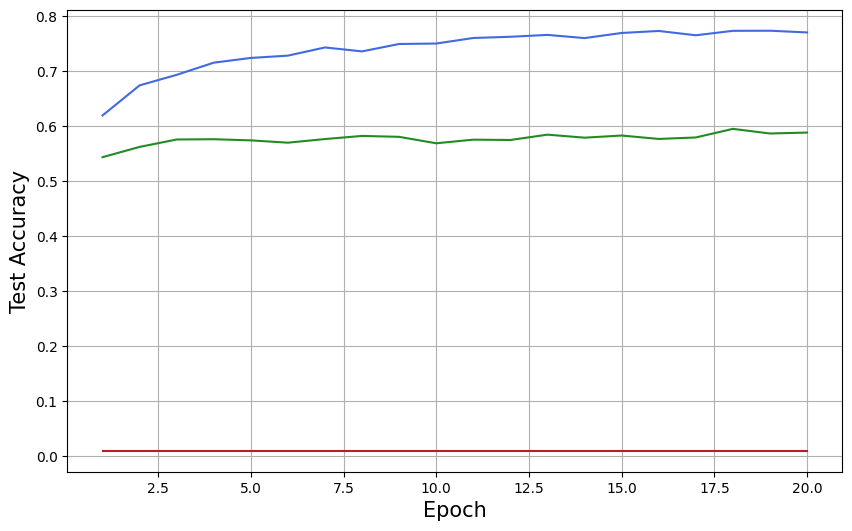} & 
        \includegraphics[width=\imagewidth]{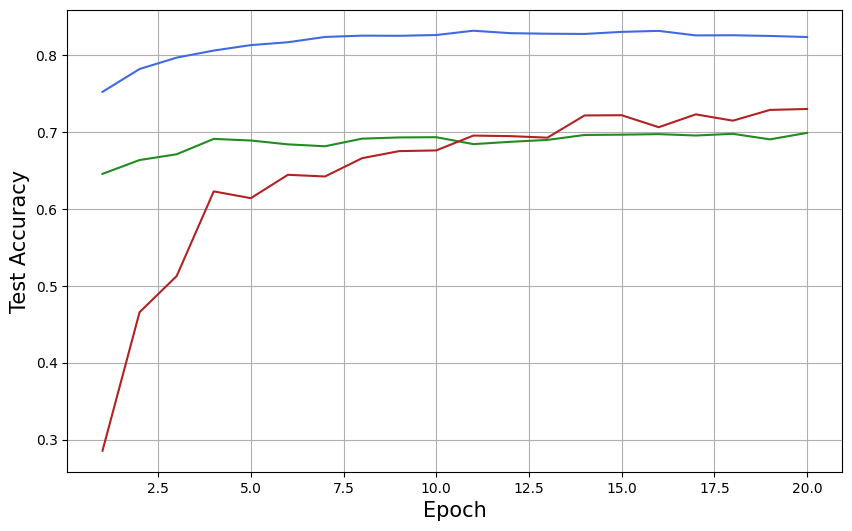} & 
        \includegraphics[width=\imagewidth]{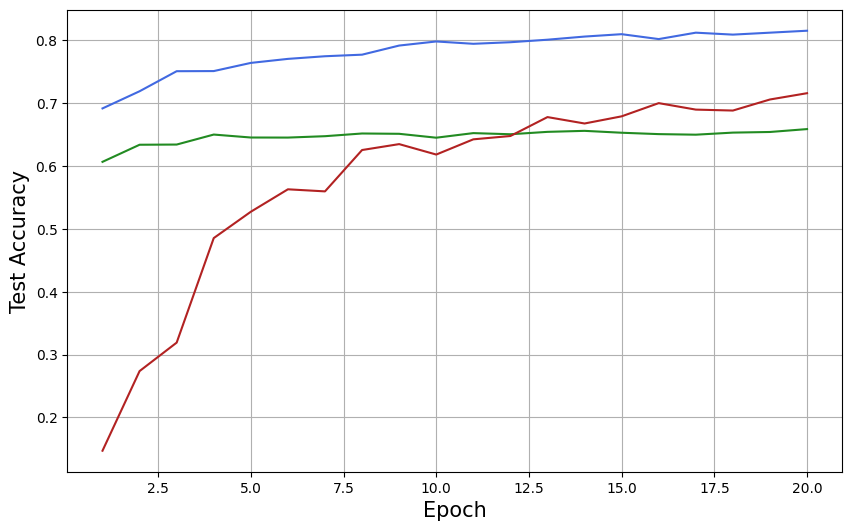} \\
        \rule{0pt}{10pt}\\
    
        \textbf{Dogs} & 
        \includegraphics[width=\imagewidth]{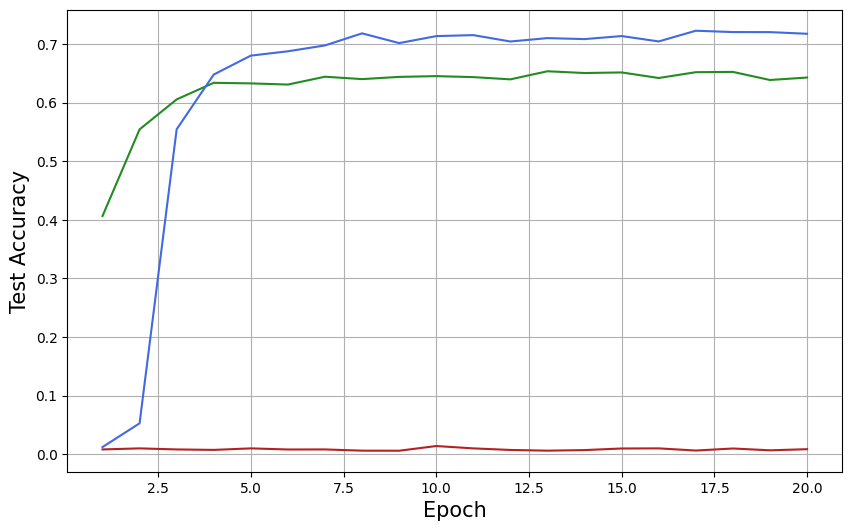} & 
        \includegraphics[width=\imagewidth]{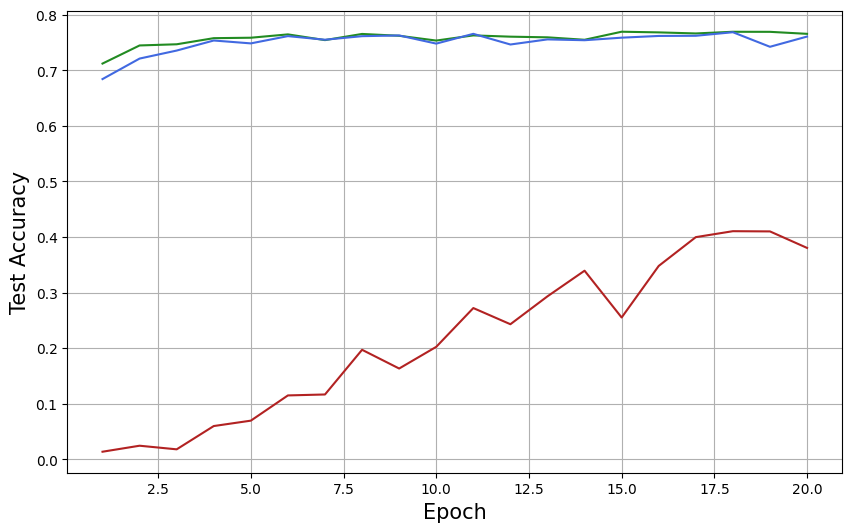} & 
        \includegraphics[width=\imagewidth]{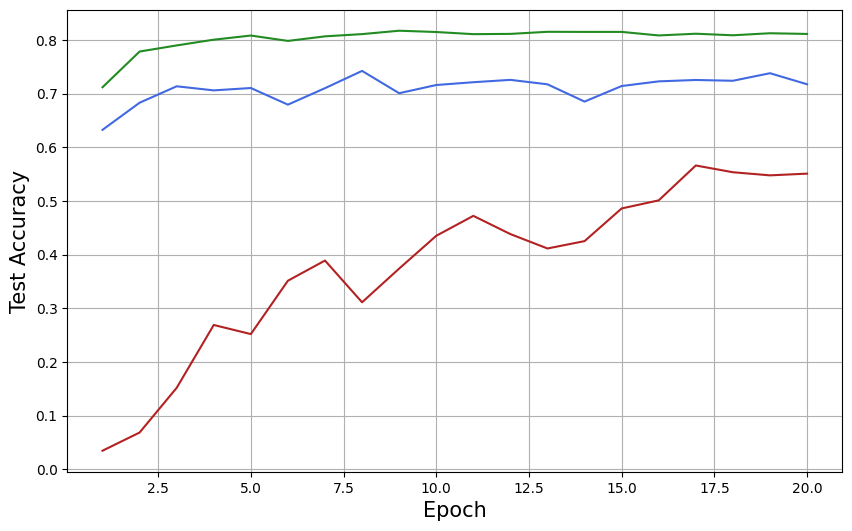} \\
        \rule{0pt}{10pt}\\
    
        \textbf{Pets} & 
        \includegraphics[width=\imagewidth]{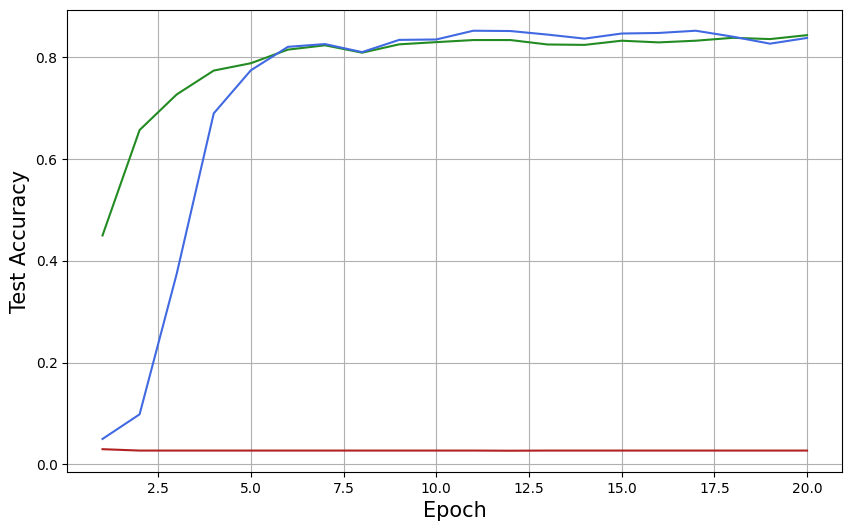} & 
        \includegraphics[width=\imagewidth]{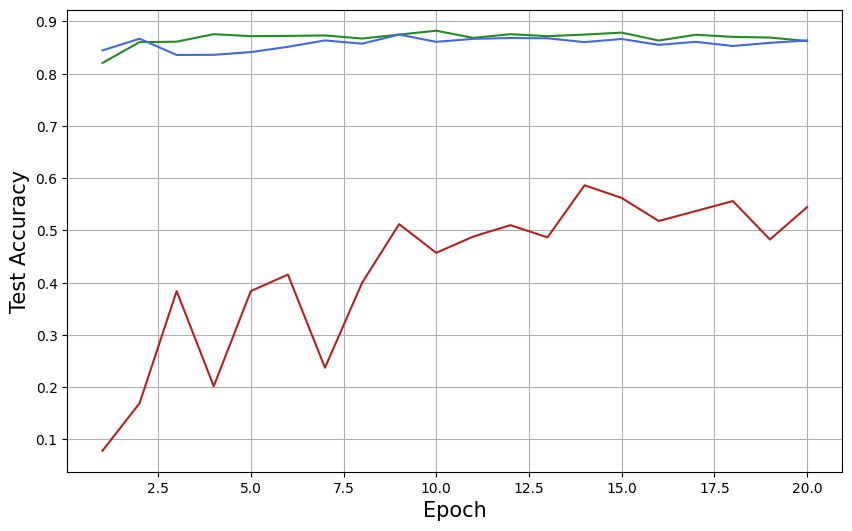} & 
        \includegraphics[width=\imagewidth]{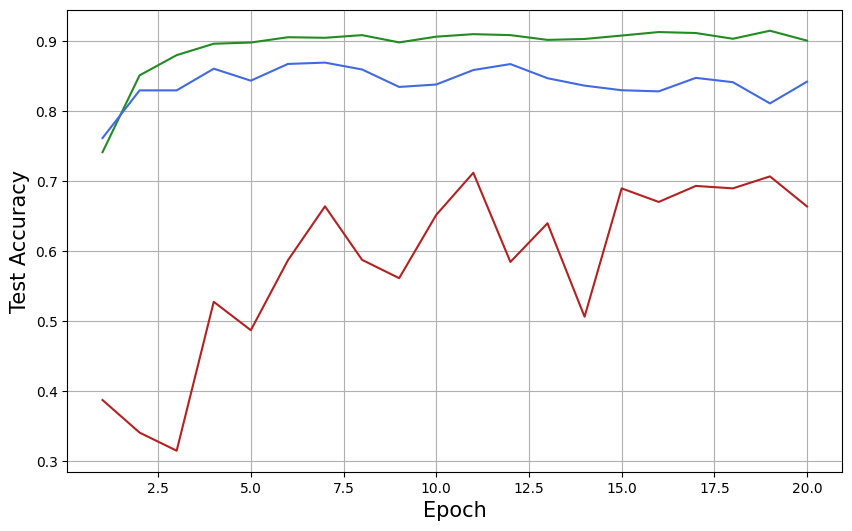} \\
        \rule{0pt}{10pt}\\
    
        \textbf{Caltech} & 
        \includegraphics[width=\imagewidth]{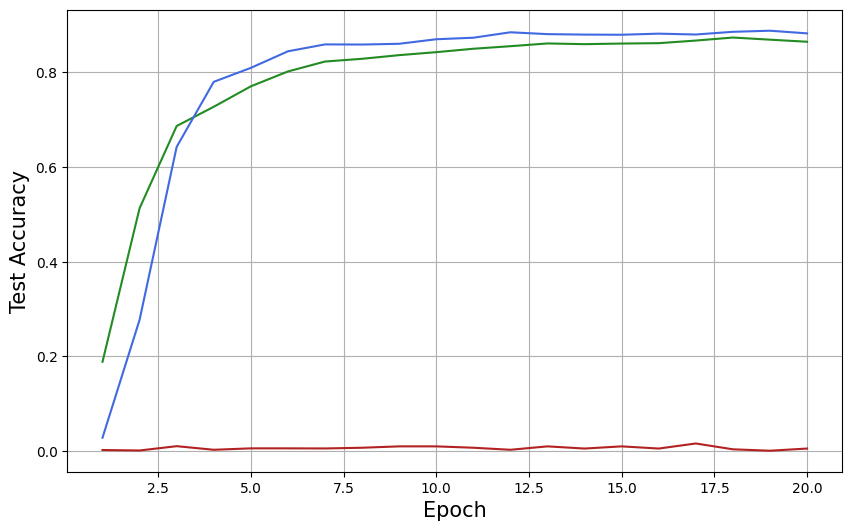} & 
        \includegraphics[width=\imagewidth]{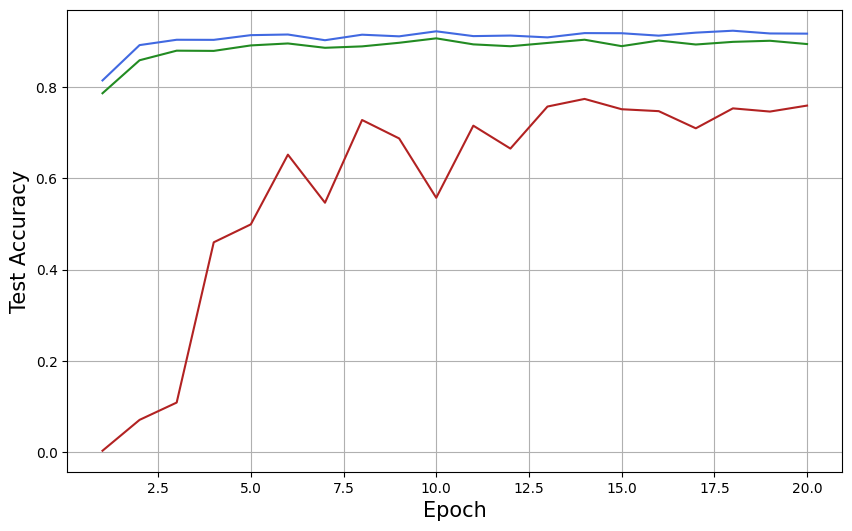} & 
        \includegraphics[width=\imagewidth]{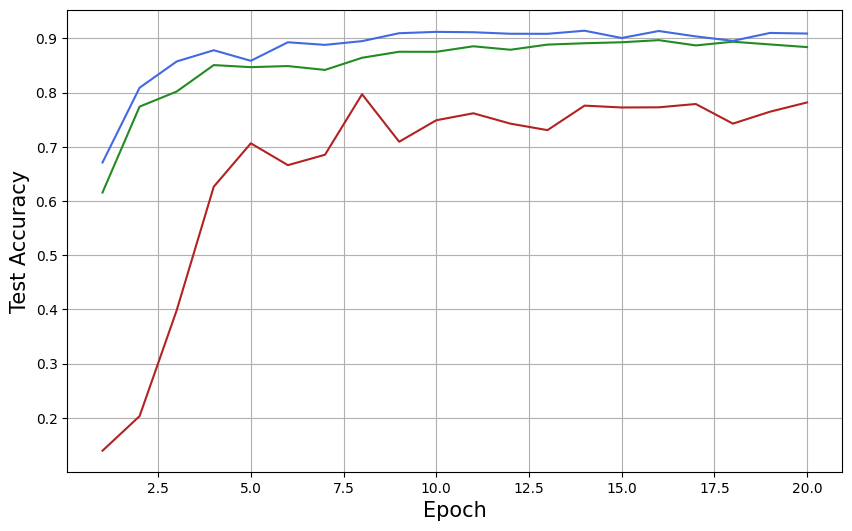} \\
        \rule{0pt}{10pt}\\
    
        \textbf{Flowers} & 
        \includegraphics[width=\imagewidth]{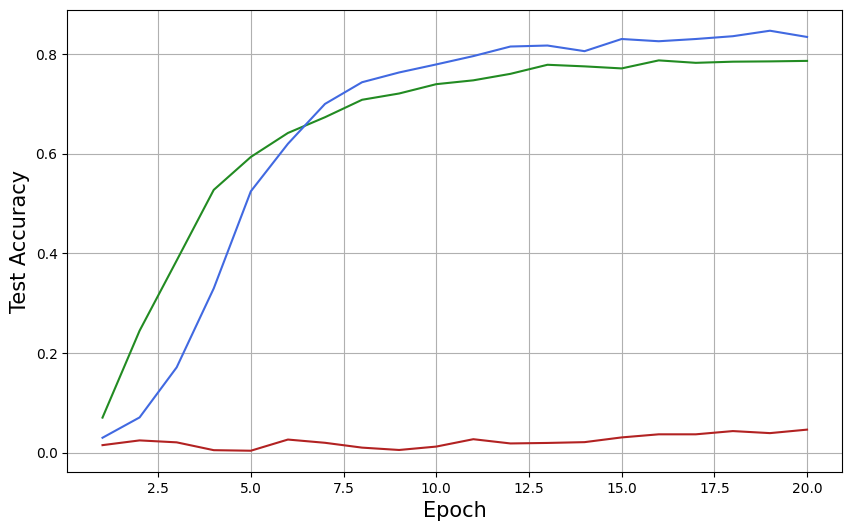} & 
        \includegraphics[width=\imagewidth]{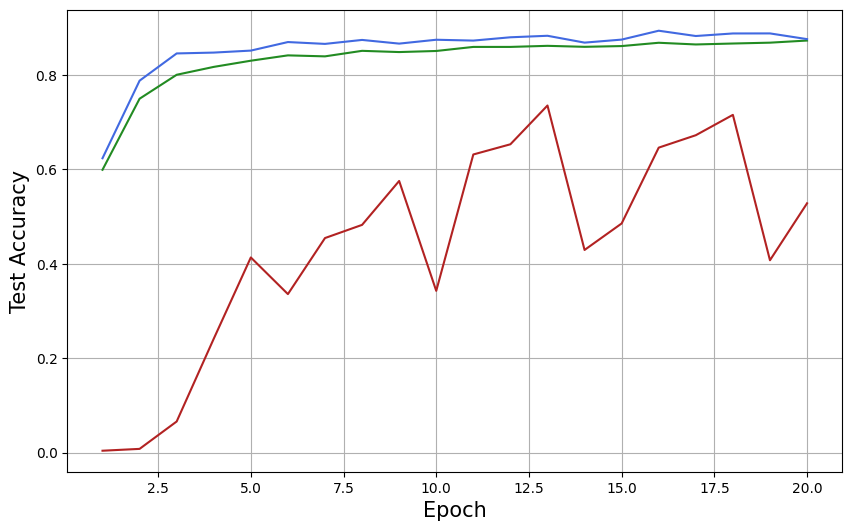} & 
        \includegraphics[width=\imagewidth]{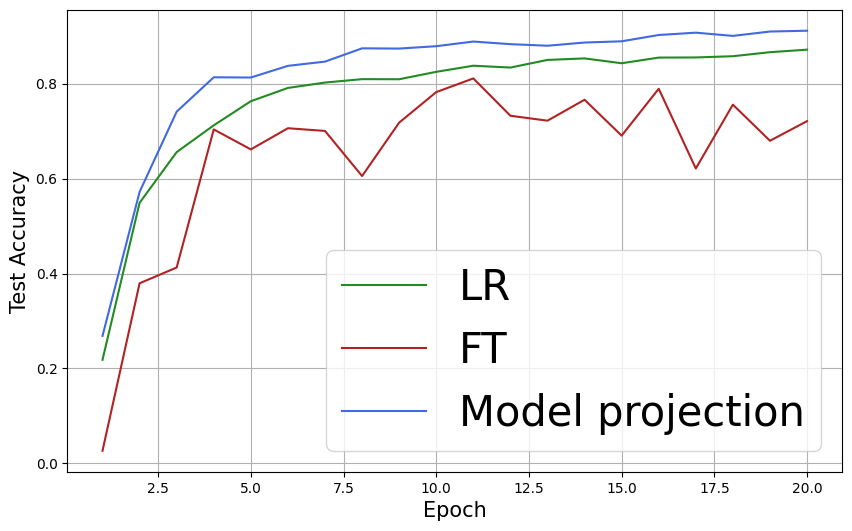} \\
    \end{tabular}
\end{figure}
\newpage
\begin{figure}[h!]
    \centering
    \caption{Results from the second set of experiments. The orange lines show the performance of 2-step fine tuning. The pink lines show the performance of 2-step fine tuning using projection in the first step, and the light blue lines represent the performance of the 2-steps using projection in both steps. Each column corresponds to a particular convolutional base, organized from oldest to most recent. Each row shows the results on a particular dataset, organized from largest to smallest. In all charts, the x-axis is epochs, and the y-axis is test accuracy.}
    \label{fig:2ndExp}
    \setlength{\imagewidth}{0.25\textwidth}

    \begin{tabular}{l c c c} 
        & \textbf{VGG16} & \textbf{ResNet50} & \textbf{DenseNet121} \\
        \rule{0pt}{15pt}\\
        
        \textbf{Food} & 
        \includegraphics[width=\imagewidth]{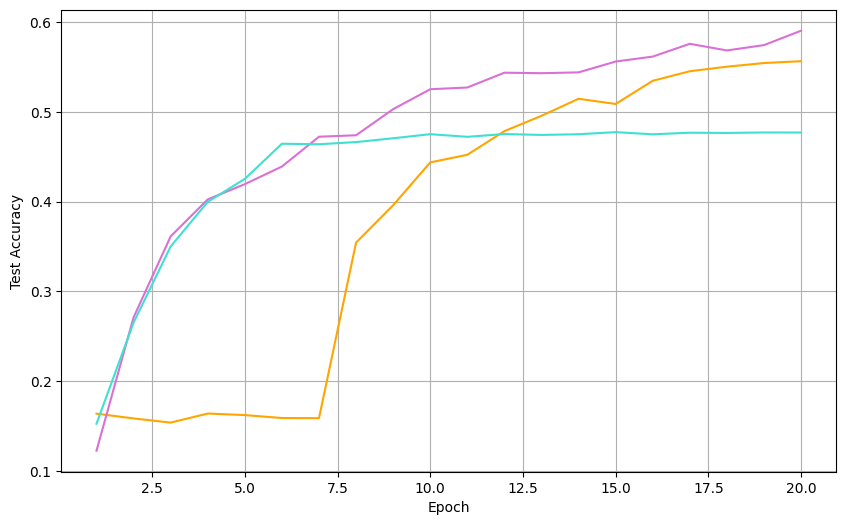} & 
        \includegraphics[width=\imagewidth]{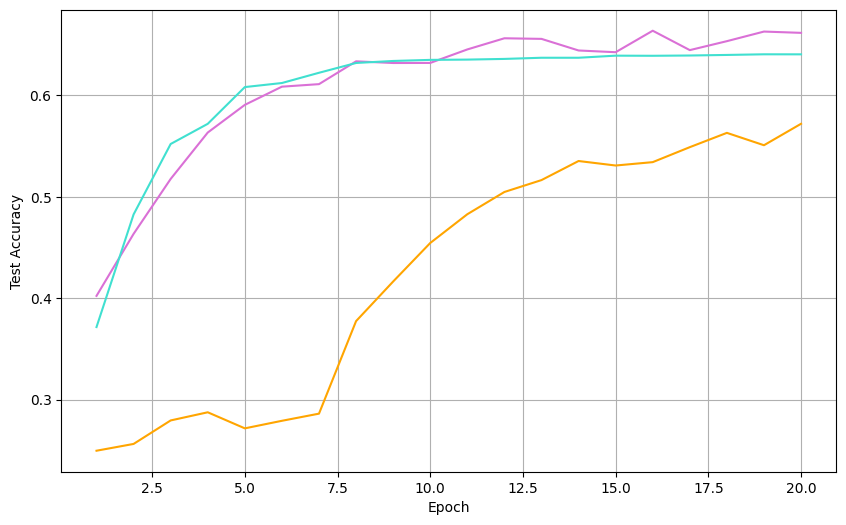} & 
        \includegraphics[width=\imagewidth]{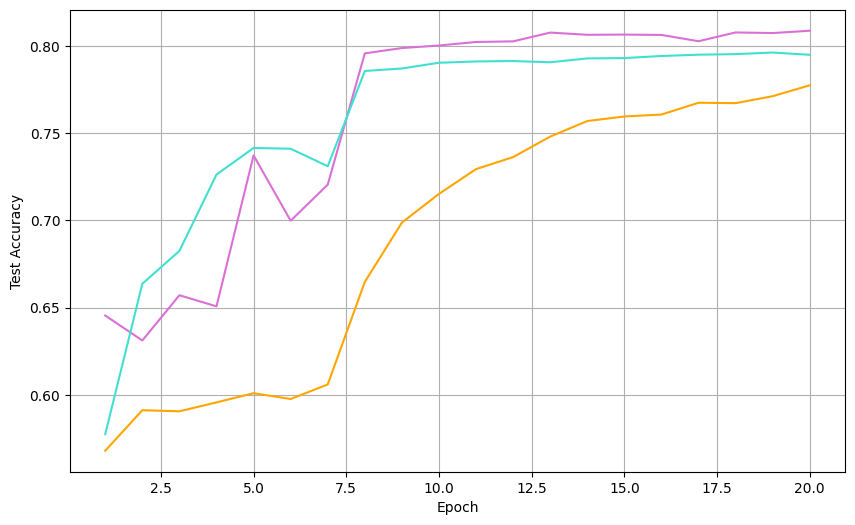} \\
        \rule{0pt}{10pt}\\
    
        \textbf{CIFAR 10} & 
        \includegraphics[width=\imagewidth]{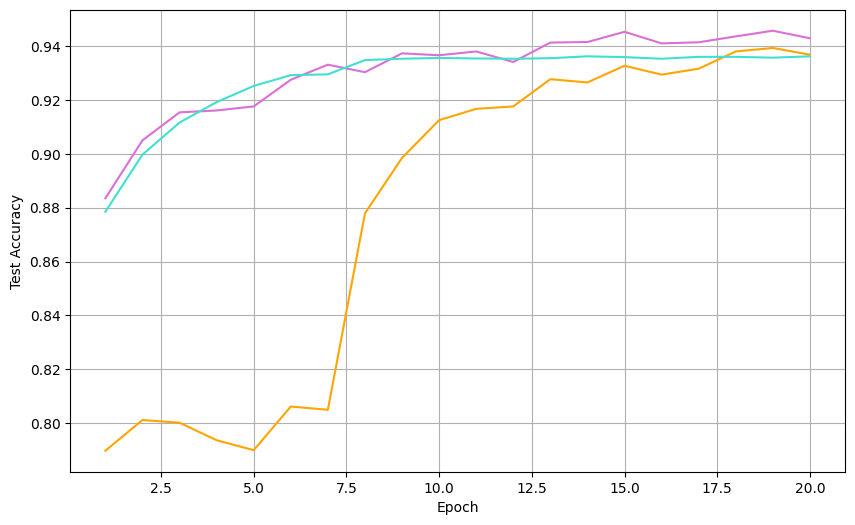} & 
        \includegraphics[width=\imagewidth]{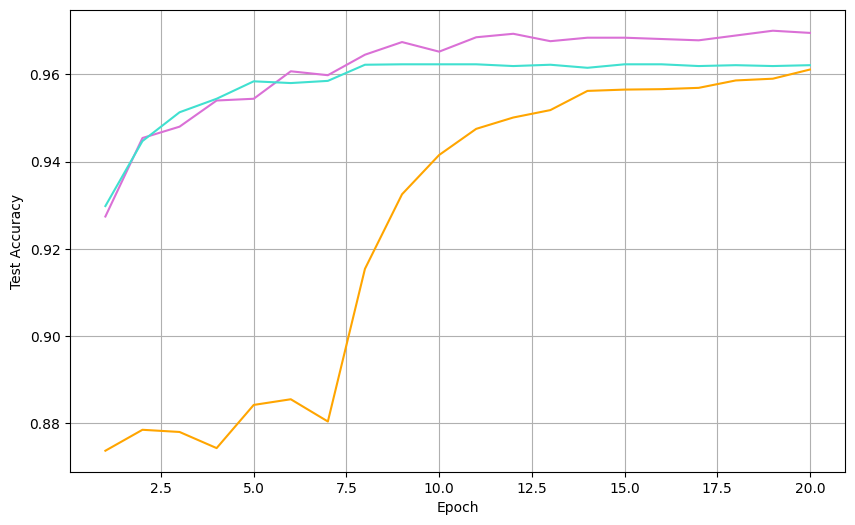} & 
        \includegraphics[width=\imagewidth]{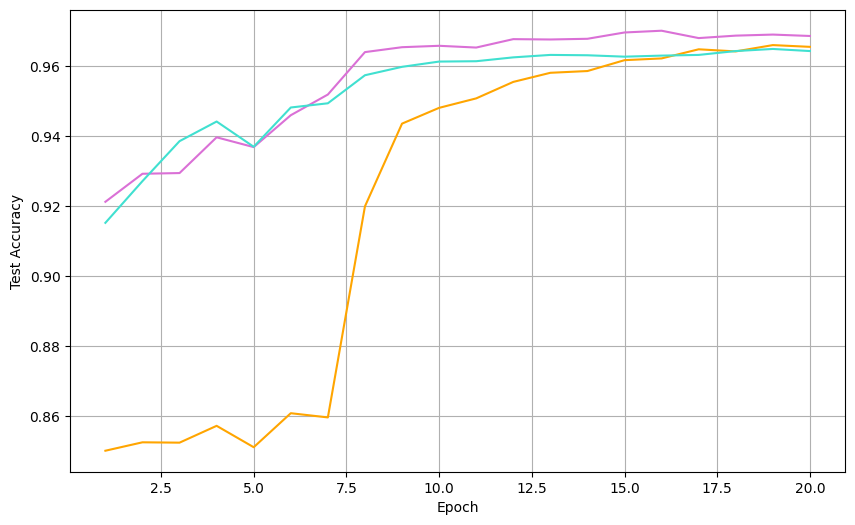} \\
        \rule{0pt}{10pt}\\
    
        \textbf{CIFAR 100} & 
        \includegraphics[width=\imagewidth]{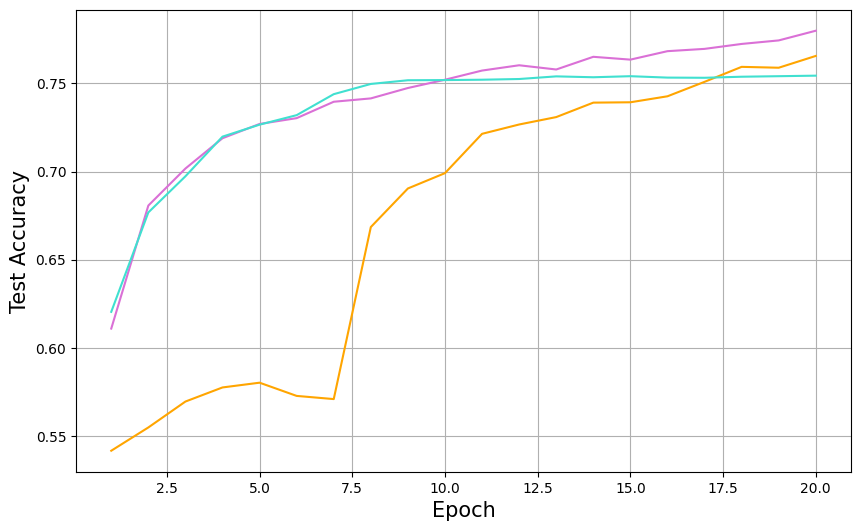} & 
        \includegraphics[width=\imagewidth]{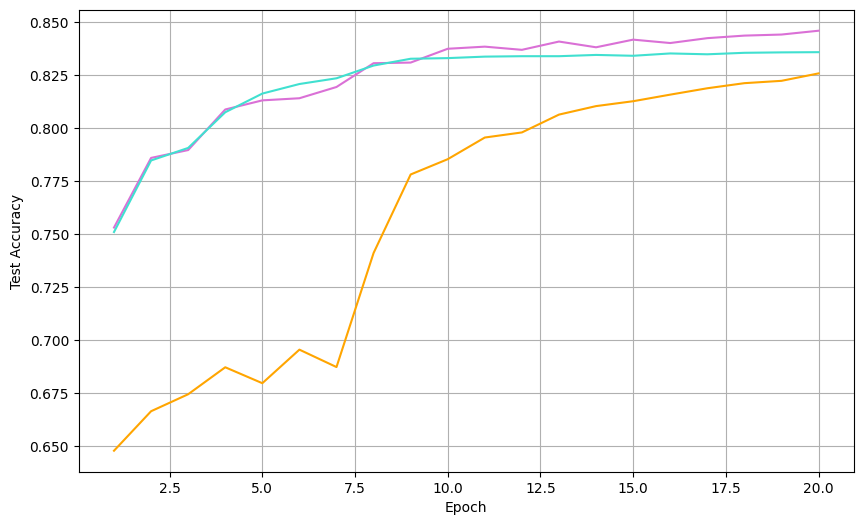} & 
        \includegraphics[width=\imagewidth]{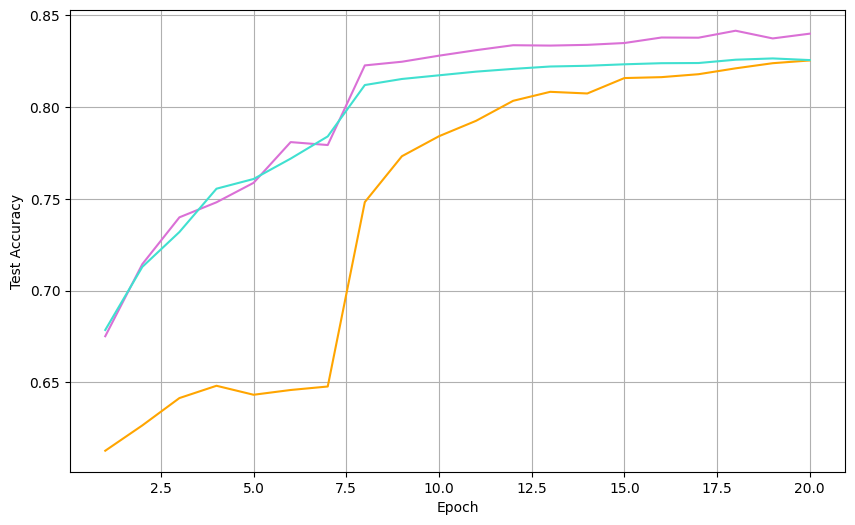} \\
        \rule{0pt}{10pt}\\
    
        \textbf{Dogs} & 
        \includegraphics[width=\imagewidth]{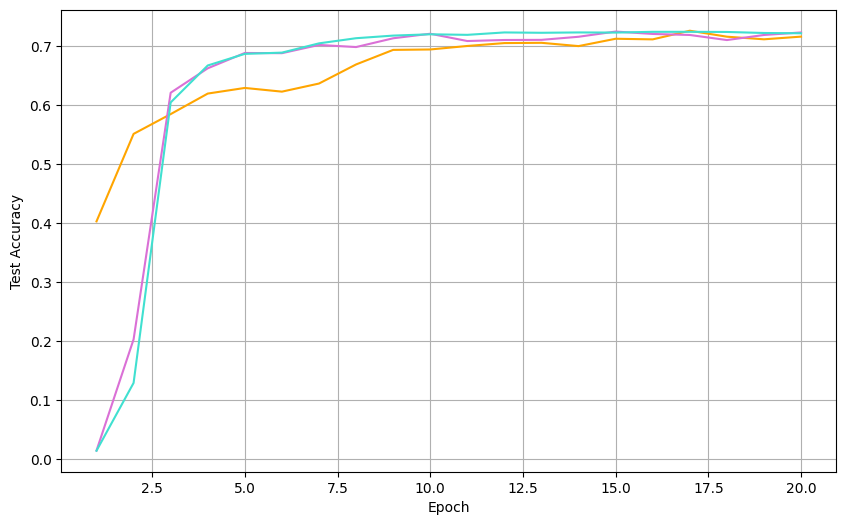} & 
        \includegraphics[width=\imagewidth]{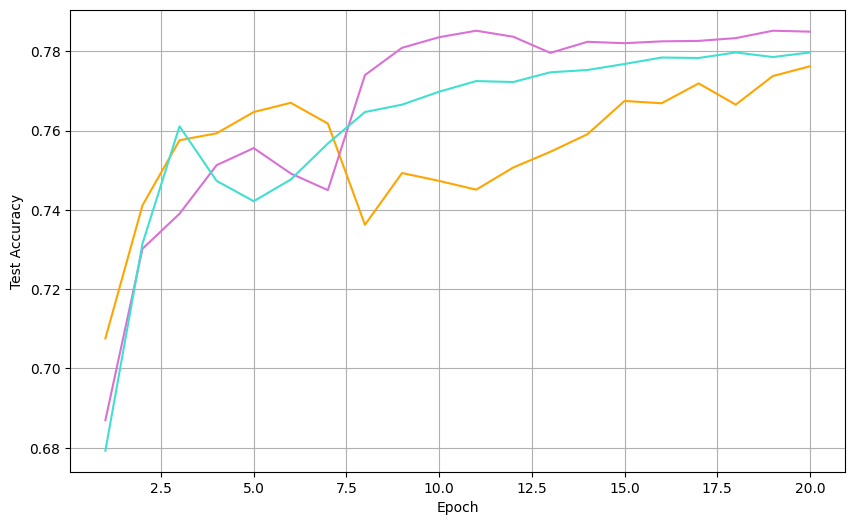} & 
        \includegraphics[width=\imagewidth]{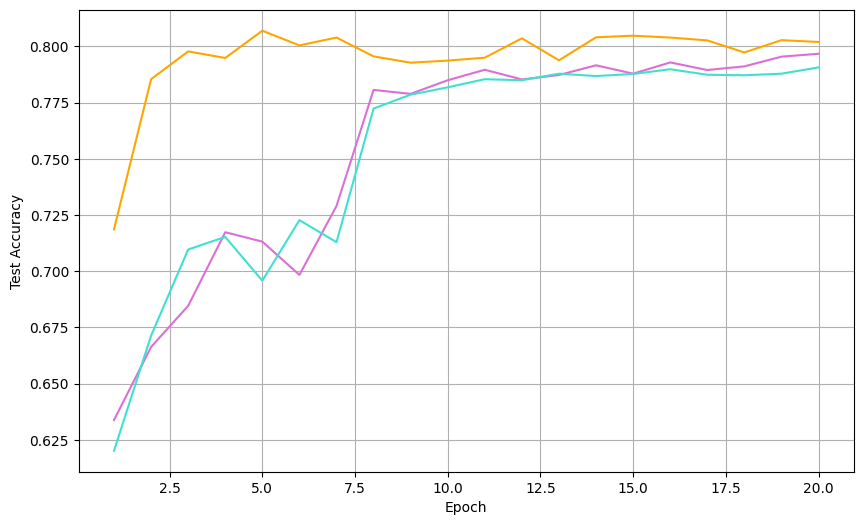} \\
        \rule{0pt}{10pt}\\
    
        \textbf{Pets} & 
        \includegraphics[width=\imagewidth]{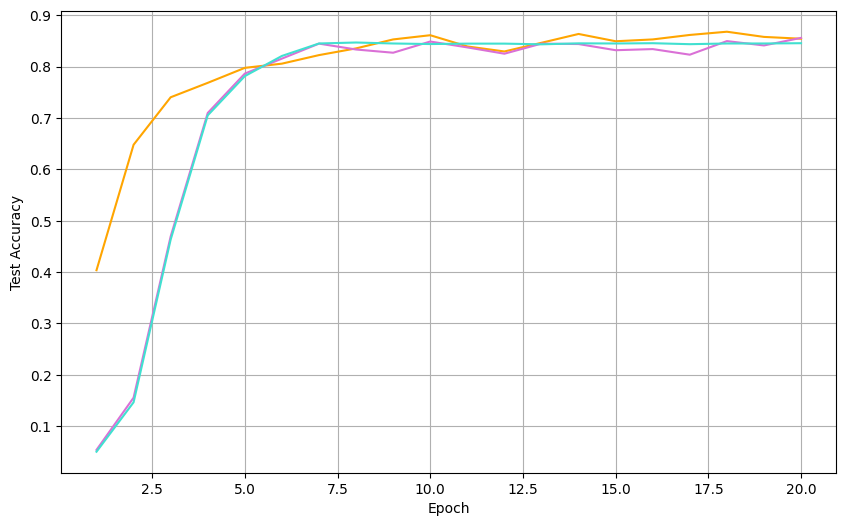} & 
        \includegraphics[width=\imagewidth]{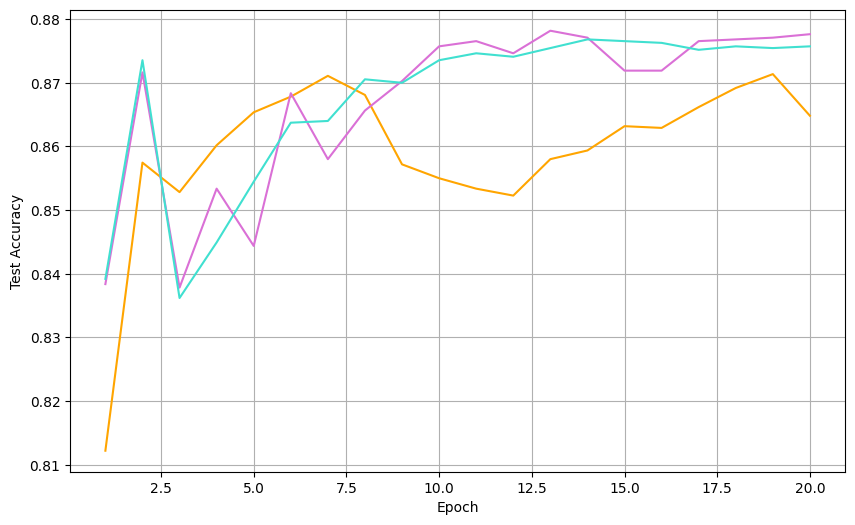} & 
        \includegraphics[width=\imagewidth]{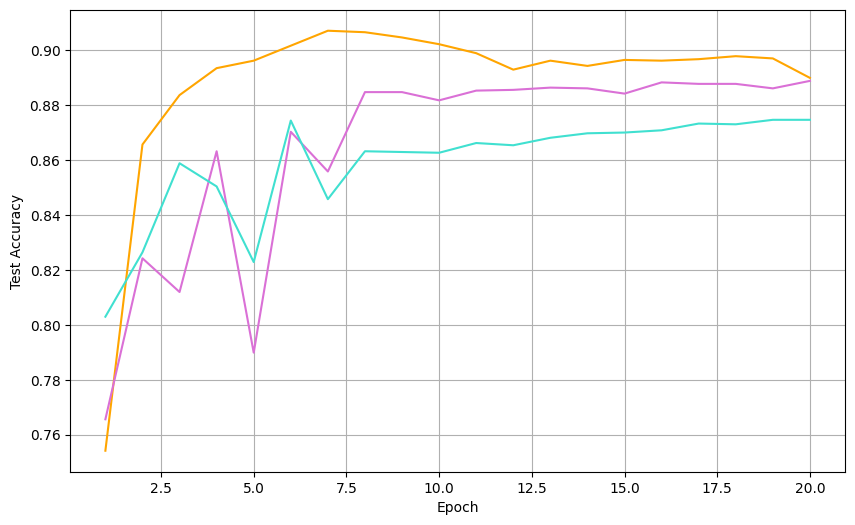} \\
        \rule{0pt}{10pt}\\
    
        \textbf{Caltech} & 
        \includegraphics[width=\imagewidth]{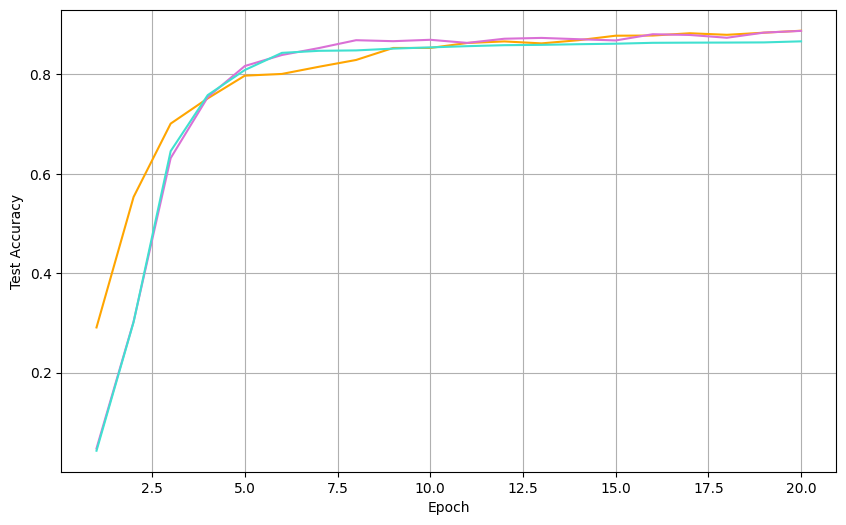} & 
        \includegraphics[width=\imagewidth]{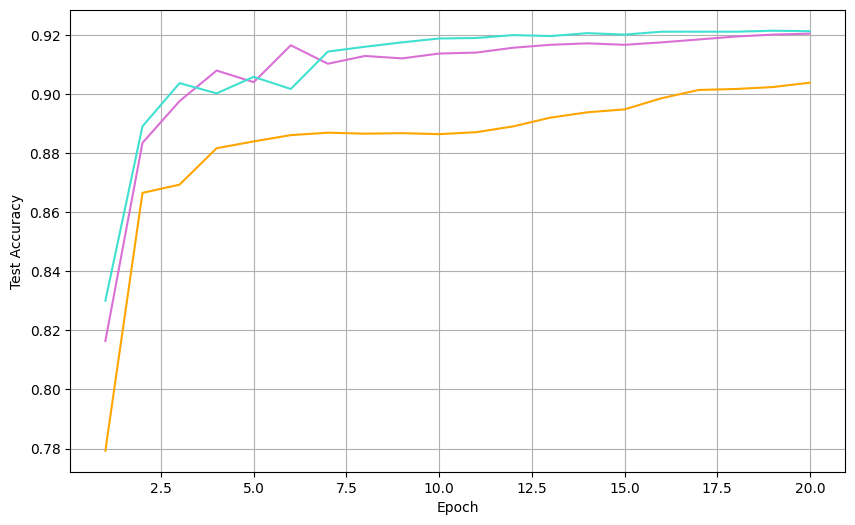} & 
        \includegraphics[width=\imagewidth]{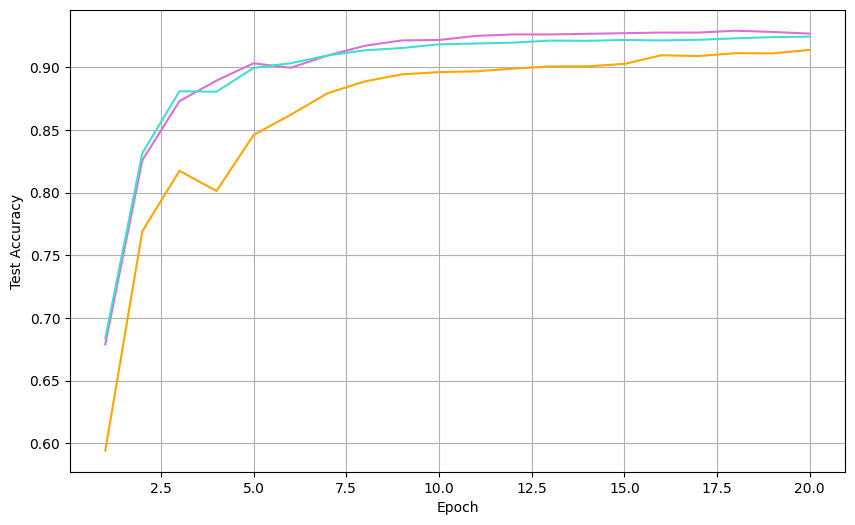} \\
        \rule{0pt}{10pt}\\
    
        \textbf{Flowers} & 
        \includegraphics[width=\imagewidth]{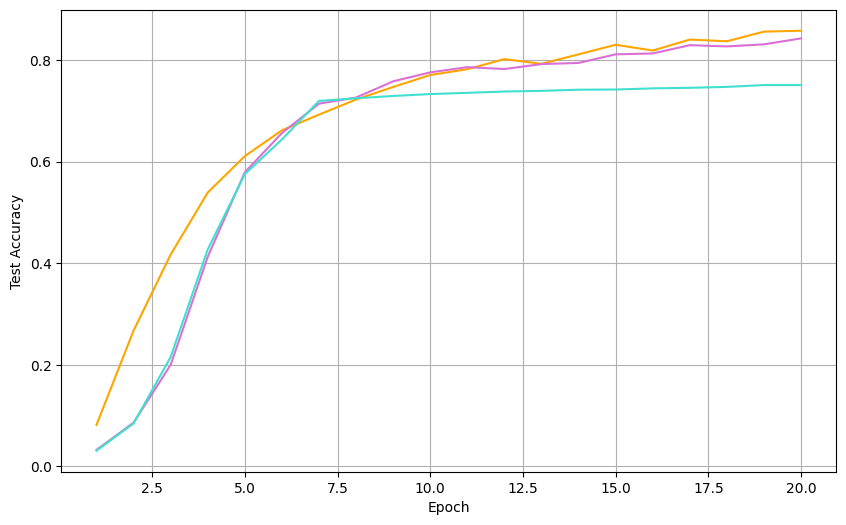} & 
        \includegraphics[width=\imagewidth]{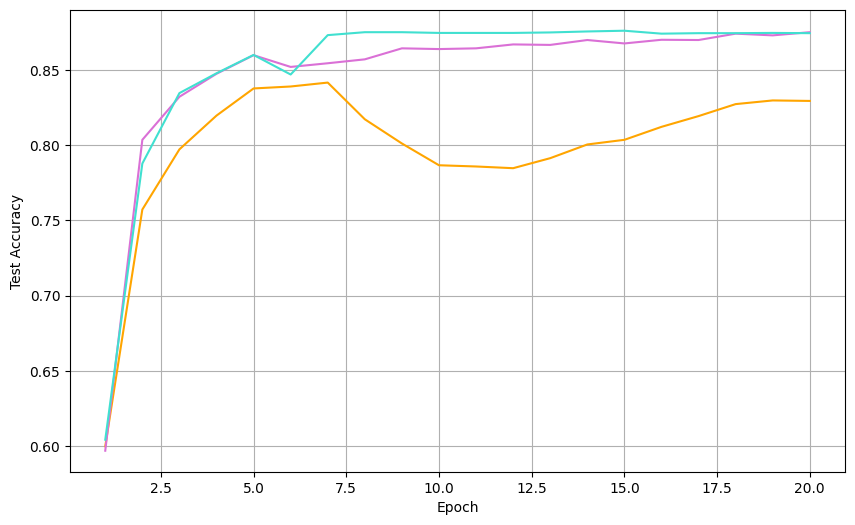} & 
        \includegraphics[width=\imagewidth]{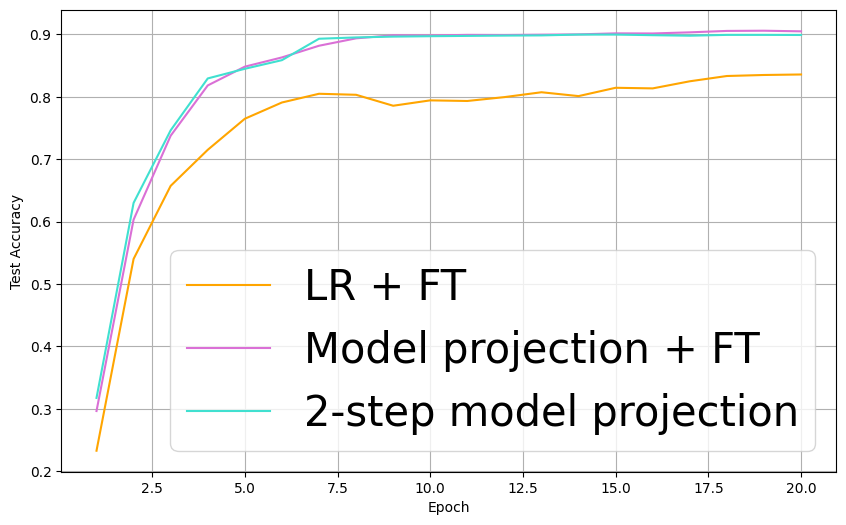} \\
    \end{tabular}
\end{figure}

\end{document}